\newenvironment{remark}[1][Remark]{\begin{trivlist}
\item[\hskip \labelsep {\bfseries #1}]}{\end{trivlist}}
\newtheorem{theorem}{Theorem}[section]
\newtheorem{lemma}{Lemma}[section]
\newtheorem{assumption}{Assumption}[section]
\DeclareMathOperator*{\argmin}{arg\,min}
\DeclareMathOperator*{\argmax}{arg\,max}
\newcommand{\mispl}{MISP-\textsc{Neil}\xspace}
\newcommand{\mispls}{MISP-\textsc{Neil}*\xspace}
\newcommand{\neil}{\textsc{Neil}\xspace}
\newcommand{\nop}[1]{}
\title{An Imitation Game for Learning Semantic Parsers from User Interaction}
\author{
    Ziyu Yao\textsuperscript{1}, Yiqi Tang\textsuperscript{1}, Wen-tau Yih\textsuperscript{2}, Huan Sun\textsuperscript{1}, Yu Su\textsuperscript{1}\\
    {\tt \{yao.470, tang.1466, sun.397, su.809\}@osu.edu}\\
    {\tt scottyih@fb.com}\\
    \textsuperscript{1}The Ohio State University\\
    \textsuperscript{2}Facebook AI Research, Seattle\\
}
\date{}
\begin{document}
\maketitle
\begin{abstract}

Despite the widely successful applications, building a semantic parser is still a tedious process in practice with challenges from costly data annotation and privacy risks. We suggest an alternative, human-in-the-loop methodology for learning semantic parsers directly from users. A semantic parser should be introspective of its uncertainties and prompt for user demonstrations when uncertain. In doing so it also gets to imitate the user behavior and continue improving itself autonomously with the hope that eventually it may become as good as the user in interpreting their questions. To combat the sparsity of demonstrations, we propose a novel \emph{annotation-efficient imitation learning} algorithm, which iteratively collects new datasets by mixing demonstrated states and confident predictions and retrains the semantic parser in a Dataset Aggregation fashion~\cite{ross2011reduction}. We provide a theoretical analysis of its cost bound and also empirically demonstrate its promising performance on the text-to-SQL problem.\footnote{Code will be available at \url{https://github.com/sunlab-osu/MISP}.}
\end{abstract}

\section{Introduction}\label{sec:intro}
\begin{figure}[t!]
    \centering
    \includegraphics[width=\linewidth]{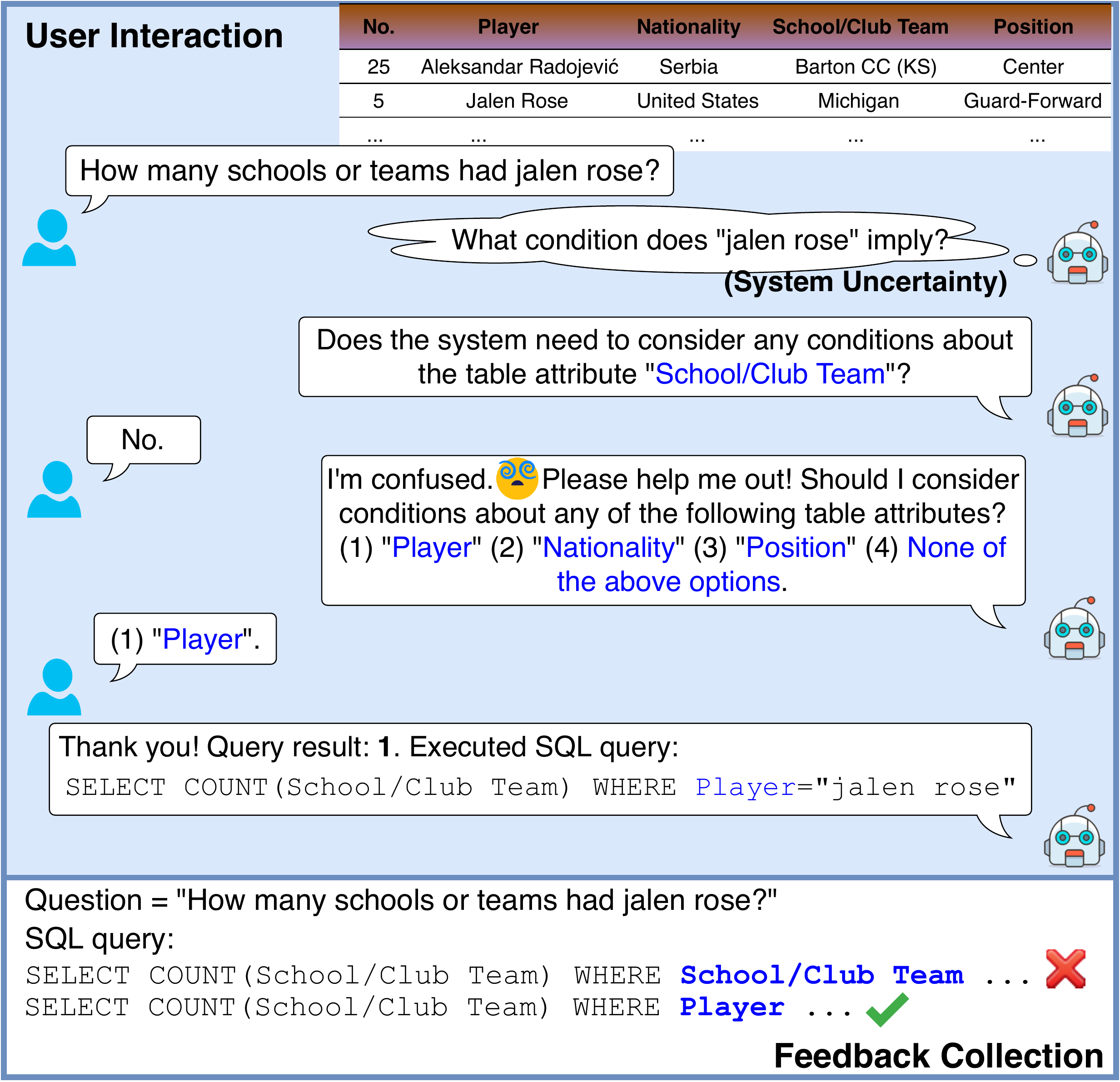}
    \caption{A semantic parser proactively interacts with the user in a friendly way to resolve its uncertainties.\nop{ In doing so it also accumulates targeted training data and continues improving itself autonomously without annotators or developers.} In doing so it also gets to imitate the user behavior and continue improving itself autonomously with the hope that eventually it may become as good as the user in interpreting their questions.}
    \label{fig:intro}
    \vspace{-10pt}
\end{figure}

Semantic parsing has found tremendous applications in building natural language interfaces that allow users to query data and invoke services without programming~\cite{woods1973progress,zettlemoyer2005learning,berant2013semantic, yih2015semantic, su2017building,yu2018spider}. The life cycle of a semantic parser typically consists of two stages: (1) \emph{bootstrapping}, where we keep collecting labeled data via trained annotators and/or crowdsourcing for model training until it reaches commercial-grade performance (e.g., 95\% accuracy on a surrogate test set), and (2) \emph{fine-tuning}, where we deploy the system, analyze the usage, and collect and annotate new data to address the identified problems or emerging needs.
However, it poses several challenges for scaling up or building semantic parsers for new domains: 
(1) \emph{high bootstrapping cost} because mainstream neural parsing models are data-hungry and the annotation cost of semantic parsing data is relatively high, 
(2) \emph{high fine-tuning cost} from continuously analyzing usage and annotating new data, and 
(3) \emph{privacy risks} arising from exposing private user data to annotators and developers~\cite{techcrunch2019privacy}. 

In this paper, we suggest an alternative methodology for building semantic parsers that could potentially address all the aforementioned problems. The key is to involve human users in the learning loop.
A semantic parser should be introspective of its uncertainties~\cite{dong2018confidence} and proactively prompt for demonstrations from the user, who knows the question best, to resolve them. In doing so, the semantic parser \nop{would be able to}can accumulate targeted training data and continue improving itself autonomously without involving any annotators or developers, hence also minimizing privacy risks. The bootstrapping cost could also be significantly reduced because an interactive system needs not to be almost perfectly accurate to be deployed. On the other hand, such interaction opens up the black box and allows users to know more about the reasoning underneath the system and better interpret the final results~\cite{su2018natural}. A human-in-the-loop methodology like this also opens the door for domain adaptation and personalization. 

This work builds on the recent line of research on interactive semantic parsing~\cite{li2014constructing, chaurasia2017dialog, gur2018dialsql, yao2019model}. Specifically, \citet{yao2019model} provide a general framework, MISP (Model-based Interactive Semantic Parsing), which handles uncertainty modeling and natural language generation. We will leverage MISP for user interaction to prove the feasibility of the envisioned methodology. However, existing studies only focus on interacting with users to resolve uncertainties. None of them has fully addressed the crucial problem of \emph{how to continually learn from user interaction}, which is the technical focus of this study.
\nop{None of them has answered the crucial question of \emph{how to continually learn from user interaction}, which is the technical focus of this study.} 

One form of user interaction explored for learning semantic parsers is asking users to validate the execution results~\cite{clarke2010driving,artzi2013weakly,iyer2017learning}. While appealing, in practice it may be a difficult task for real users because they would not need to ask the question if they knew the answer in the first place. We instead aim to learn semantic parsers from fine-grained interaction where users only need to answer simple questions covered by their background knowledge (Figure~\ref{fig:intro}). However, learning signals from such fine-grained interactions are bound to be sparse because the system needs to avoid asking too many questions and overwhelming the user, which poses a challenge for learning. 

To tackle the problem, we propose \neil, a novel \emph{a\underline{N}notation-\underline{E}fficient \underline{I}mitation \underline{L}earning} algorithm for learning semantic parsers from such sparse, fine-grained demonstrations: The agent (semantic parser) only requests for demonstrations when it is uncertain about a state (parsing step).
For certain/confident states, actions chosen by the current policy are deemed correct and are executed to continue parsing. The policy is updated iteratively in a Dataset Aggregation fashion~\cite{ross2011reduction}. In each iteration, all the state-action pairs, demonstrated or confident, are included to form a new training set and train a new policy in a supervised way. Intuitively, using confident state-action pairs for training mitigates the sparsity issue, but it may also introduce training bias. We provide a theoretical analysis and show that, under mild assumptions, the impact of the bias and the quality of the \neil policy can be controlled by tuning the policy initialization and confidence estimation accuracy.

We also empirically compare \neil with a number of baselines on the text-to-SQL parsing task. On the WikiSQL~\cite{zhong2017seq2sql} dataset, we show that, \emph{when bootstrapped using only 10\% of the training data, \neil can achieve almost the same test accuracy (2\% absolute loss) as the full expert annotation baseline, while requiring less than 10\% of the annotations that the latter needs}, without even taking into account the different unit cost of annotations from users vs.\ domain experts. We also show that the quality of the final policy is largely determined by the quality of the initial policy, which provides empirical support for the theoretical analysis. Finally, we demonstrate that \neil can generalize to more complex semantic parsing tasks such as Spider~\cite{yu2018spider}.

\section{Related Work}\label{sec:related_work}
\begin{remark}[Interactive Semantic Parsing.] 
Our work extends the recent idea of leveraging system-user interaction to improve semantic parsing on the fly~\cite{li2014constructing,he2016human,chaurasia2017dialog,su2018natural, gur2018dialsql,yao2018interactive,yao2019model,elgohary2020speak, zeng2020photon, SMDataflow2020}. \citet{gur2018dialsql} built a neural model to identify and correct error spans in generated queries via dialogues. \citet{yao2019model} formalized a model-based intelligent agent MISP, which enables user interaction via a policy probability-based uncertainty estimator, a grammar-based natural language generator, and a multi-choice question-answer interaction design. More recently, \citet{elgohary2020speak} crowdsourced a dataset for fixing incorrect SQL queries using free-form natural language feedback. {\citet{SMDataflow2020} constructed a contextual semantic parsing dataset where agents could trigger conversations to handle exceptions such as ambiguous or incomplete user commands.}
In this work, we seek to continually improve semantic parsers from such user interaction, a topic that is not carefully studied by the aforementioned work.
\end{remark}

\begin{remark}[Interactive Learning from Feedback.] 
Learning interactively from user feedback has been studied in many NLP tasks \cite{sokolov2016learning, wang2016learning, wang2017naturalizing, nguyen2017reinforcement, gao2018april, abujabal2018never, hancock2019learning, kreutzer2019self}. Most relevant to us, \citet{hancock2019learning} constructed a self-feeding chatbot that improves itself from user satisfied responses and their feedback on unsatisfied ones.

In the field of semantic parsing, \citet{clarke2010driving, artzi2013weakly, iyer2017learning} learned semantic parsers from binary user feedback on whether executing the generated query yields correct results. However, often times {(especially in information-seeking scenarios)} it may not be very practical to expect end users able to validate the denotation correctness (e.g., consider validating an execution result ``\emph{103}'' for the question ``\emph{how many students have a GPA higher than 3.5}'' from a massive table). Active learning is also leveraged to save human annotations \cite{duong2018active,ni20aaai}.
Our work is complementary to this line of research as we focus on learning interactively from end users (not ``teachers'').
\end{remark}

\begin{remark}[Imitation Learning.]
Traditional imitation learning algorithms \cite{daume2009search,ross2010efficient,ross2011reduction,Ross2014ReinforcementAI} iteratively execute and train a policy by collecting expert demonstrations for every policy decision. Despite its efficacy, the learning demands costly annotations from experts. In contrast, we save expert effort by selectively requesting demonstrations. This idea is related to active imitation learning~\cite{chernova2009interactive,kimmaximum,judah2014active,zhang2017query}. For example, \citet{judah2014active} assumed a ``teacher'' and actively requested demonstrations for most informative trajectories in the unlabeled data pool.
Similar to us, \citet{chernova2009interactive} solicited demonstrations only for uncertain states. However, their algorithm simply abandons policy actions that are confident, leading to sparse training data. Instead, our algorithm utilizes confident policy actions to combat the sparsity issue and is additionally provided with a theoretical analysis.

{Concurrent with our work, \citet{daume20active} studied active imitation learning for structured prediction tasks such as named entity recognition. Our work instead focuses on semantic parsing, which presents a unique challenge of \emph{integrality}, i.e., the output sequence (a semantic parse) could only be correct \emph{as a whole} (as opposed to \emph{partially} correct) in order to yield the correct denotation. We therefore propose a new cost function (Section~\ref{sec:theoretical_analysis}) to theoretically analyze the factors that affect the efficacy of learning semantic parsers via imitation.}

\end{remark}

\section{Preliminaries} \label{sec:overview}

Formally, we assume the semantic parsing model generates a semantic parse by executing a sequence of actions $a_t$ (parsing decisions) at each time step $t$. In practice, the definition of an action depends on the specific semantic parsing model, as we will illustrate shortly. A state $s_t$ is then defined as a tuple of $(q, a_{1:t-1})$, where $q$ is the initial natural language question and $a_{1:t-1}=(a_1, ...,a_{t-1})$ is the current partial parse. In particular, the initial state $s_1=(q, \phi)$ contains only the question.
Denote a semantic parser as $\hat{\pi}$, which is a \emph{policy function} \cite{sutton2018reinforcement} that takes a state $s_t$ as input and outputs a probability distribution over the action space. The semantic parsing process can be formulated as sampling a \emph{trajectory} $\tau$ by alternately observing a state and sampling an action from the policy, i.e., $\tau=(s_1$, $a_1 \sim \hat{\pi}(s_1)$, ..., $s_T$, $a_T \sim \hat{\pi}(s_T))$, assuming a trajectory length $T$. The probability of the generated semantic parse becomes: $p_{\hat{\pi}}(a_{1:T}|s_1) = \prod_{t=1}^T p_{\hat{\pi}}(a_t|s_t)$.

An interactive semantic parser typically follows the aforementioned definition and requests the user's validation of a specific action $a_t$. Based on the feedback, a correct action $a_t^*$ can be inferred to replace the original $a_t$. The parsing process continues with $a_t^*$ afterwards.

{In this work, we adopt MISP \cite{yao2019model} as the back-end interactive semantic parsing framework, given that it is a principled framework for this purpose and can generalize to various kinds of semantic parsers and logical forms. However, we note that our proposed algorithm is not limited to MISP; it instead depicts a general algorithm for learning semantic parsers from user interaction. We illustrate the application of MISP to {a sketch-based parser,} SQLova \cite{hwang2019comprehensive}, as follows. More details and another example of how it applies to a non-sketch-based parser EditSQL \cite{zhang2019editing} can be found in Appendix~\ref{app:misp}.
}

\begin{remark}[Example.] 
Consider the SQLova parser, which generates a query by filling ``slots'' in a pre-defined SQL sketch ``\texttt{SELECT Agg SCol WHERE WCol OP VAL}''.\nop{In practice, SQLova takes an additional step to predict the number of \texttt{WHERE} conditions. Since it is vague to request user feedback on such a prediction, we consider it as an ``implicit'' decision that controls the explicit SQL generation steps.} To complete the SQL query in Figure~\ref{fig:intro}, it first takes three steps: \texttt{SCol}=``\texttt{School/Club Team}'' ($a_1$), \texttt{Agg}=``\texttt{COUNT}'' ($a_2$) and \texttt{WCol}=``\texttt{School/Club Team}'' ($a_3$). MISP detects that $a_3$ is uncertain because its probability is lower than a pre-specified threshold. It validates $a_3$ with the user and corrects it with \texttt{WCol}=``\texttt{Player}'' ($a_3^*$). The parsing continues with
\texttt{OP}=``\texttt{=}'' ($a_4$) and \texttt{VAL}=``\texttt{jalen rose}'' ($a_5$). Here, the trajectory length $T=5$.
\end{remark}

\section{Learning Semantic Parsers from User Interaction} \label{sec:model}

In this section, we present \neil, an \textit{a\underline{N}notation-\underline{E}fficient \underline{I}mitation \underline{L}earning} algorithm that trains a parser from user interaction, without requiring a large amount of user feedback (or ``annotations''). This property is particularly important for end user-facing systems in practical use.
Note that while we apply \neil to semantic parsing in this work, in principle it can also be applied to other structured prediction tasks (e.g., machine translation).

\subsection{An Imitation Learning Formulation}
Under the interactive semantic parsing framework, a learning algorithm intuitively can aggregate $(s_t, a_t^*)$ pairs collected from user interactions and trains the parser to enforce $a_t^*$ under the state $s_t=(q, a_{1:t-1})$. However, this is not achievable by conventional supervised learning since the training needs to be conducted in an \emph{interactive} environment, where the partial parse $a_{1:t-1}$ is generated by the parser itself.

Instead, we formulate it as an imitation learning problem \cite{daume2009search, ross2010efficient}.
Consider the user as a \emph{demonstrator}, then the derived action $a_t^*$ can be viewed as an \emph{expert demonstration} which is interactively sampled from the demonstrator's policy (or \emph{expert policy}) $\pi^*$,\footnote{We follow the imitation learning literature and use ``expert'' to refer to the imitation target, but the user in our setting by no means needs to be a ``domain (SQL) expert''.} i.e., $a_t^* \sim \pi^*(s_t)$. The goal of our algorithm is thus to train policy $\hat{\pi}$ to imitate the expert policy $\pi^*$. A general procedure is described in Algorithm~\ref{alg:mispl_general} (Line 1--9), where $\hat{\pi}$ is learned iteratively for every $m$ user questions. In each iteration, the policy is retrained on an aggregated training data over the past iterations, following the Dataset Aggregation fashion in \cite{ross2011reduction}.

\begin{algorithm}[t!]
  \begin{algorithmic}[1]
  \Require Initial training data $D_0$, policy confidence threshold $\mu$.
  \Ensure A trained policy $\hat{\pi}$.
  \State Initialize $D \leftarrow D_0$.
  \State Initialize $\hat{\pi}_1$ by training it on $D_0$.
  \For{$i=1$ to $N$}
    \State Observe $m$ user questions $q_j, j \in [1,m]$;
    \State $D_i \leftarrow \bigcup_{j=1}^m \textsc{Parse\&Collect}(\mu, q_j, \hat{\pi}_i, \pi^*)$;
    \State Aggregate dataset $D \leftarrow D \bigcup D_i$;
    \State Train policy $\hat{\pi}_{i+1}$ on $D$ using Eq.~\eqref{eq:general}.
  \EndFor
  \State \textbf{return} best $\hat{\pi}_i$ on validation.
  \vspace{2mm}
  \Function{Parse\&Collect}{$\mu, q, \hat{\pi}_i, \pi^*$}
    \State Initialize $D_i^\prime \leftarrow \emptyset$, $s_1=(q, \phi)$.
    \For{$t=1$ to $T$}
        \State Preview action $a_t = \argmax_{a} \hat{\pi}_i(s_t)$; 
        \If{$p_{\hat{\pi}_i}(a_t|s_t) \geq \mu$}
            \State $w_t \leftarrow 1$;
            \State Collect $D_i^\prime \leftarrow D_i^\prime \bigcup \{(s_t, a_t, w_t)\}$;
            \State Execute $a_t$;
        \Else
            \State Trigger user interaction and derive expert demonstration $a_t^* \sim \pi^*(s_t)$;
            \State $w_t \leftarrow 1$ if $a_t^*$ is valid; $0$ otherwise;
            \State Collect $D_i^\prime \leftarrow D_i^\prime \bigcup \{(s_t, a_t^*, w_t)\}$;
            \State Execute $a_t^*$.
        \EndIf
    \EndFor
    \State \textbf{return} $D_i^\prime$. 
  \EndFunction
  \end{algorithmic}
  \caption{The \neil Algorithm}
  \label{alg:mispl_general}
\end{algorithm}

\subsection{Annotation-efficient Imitation Learning}

Consider parsing a user question and collecting training data using the parser $\hat{\pi}_i$ in the $i$-th iteration (Line 5). A standard imitation learning algorithm such as \textsc{DAgger} \cite{ross2011reduction} usually requests expert demonstration $a_t^*$ for every state $s_t$ in the sampled trajectory. However, it requires a considerable amount of user annotations, which may not be practical when interacting with end users.

Instead, we propose to adopt an \emph{annotation-efficient} learning strategy in \neil, which saves user annotations by \emph{selectively} requesting user interactions, as indicated in function \textsc{Parse\&Collect}. In each parsing step, the system first previews whether it is confident about its own decision $a_t$ (Line 13--14), which is determined when its probability is no less than a threshold, i.e., $p_{\hat{\pi}_i}(a_t|s_t) \geq \mu$.\footnote{{The metric is shown effective for interactive semantic parsing in \citet{yao2019model}. Other confidence measures can also be explored, as we will discuss in Section~\ref{sec:future_work}.}}
In this case, the algorithm executes and collects \nop{the policy action}{its own action} $a_t$ (Line 15--17); otherwise, a system-user interaction will be triggered and the derived demonstration $a_t^* \sim \pi^*(s_t)$ will be collected and executed to continue parsing (Line 19--22).

Denote a collected state-action pair as $(s_t, \Tilde{a}_t)$, where $\Tilde{a}_t$ could be $a_t$ or $a_t^*$ depending on whether an interaction is requested. To train $\hat{\pi}_{i+1}$ (Line 7), our algorithm adopts a \emph{reduction-based} approach similar to \textsc{DAgger} and reduces imitation learning to iterative supervised learning. Formally, we define our training loss function as a \emph{weighted} negative log-likelihood:
\vspace{-2mm}
\begin{equation}\label{eq:general}
\small
    \mathcal{L}(\hat{\pi}_{i+1}) = - \frac{1}{|D|} \sum_{(s_t, \Tilde{a}_t, w_t) \in D} w_t \log p_{\hat{\pi}_i}(\Tilde{a}_t|s_t),
    \vspace{-2mm}
\end{equation}
where $D$ is the aggregated training data over $i$ iterations and $w_t$ denotes the weight of $(s_t, \Tilde{a}_t)$.

We consider assigning weight $w_t$ in three cases: 
(1) For confident actions $a_t$, we set $w_t=1$. This essentially treats \nop{confident actions}{the system's own confident actions} as gold decisions, which resembles self-training \cite{scudder1965probability,nigam2000analyzing, mcclosky2006effective}. 
(2) For user-confirmed decisions (\emph{valid demonstrations} $a_t^*$), such as enforcing a \texttt{WHERE} condition on ``\texttt{Player}'' in Figure~\ref{fig:intro}, $w_t$ is also set to 1 to encourage the parser to imitate the correct decisions from users. 
(3) For uncertain actions that cannot be addressed via human interactions (\emph{invalid demonstrations} $a_t^*$, which are identified when the user selects ``None of the above options'' in Figure~\ref{fig:intro}), we assign $w_t=0$. This could happen when some of the incorrect precedent actions are not fixed. For example, in Figure~\ref{fig:intro}, if the system missed correcting the \texttt{WHERE} condition on ``\texttt{School/Club Team}'', then whatever value it generates after ``\texttt{WHERE School/Club Team=}'' is wrong, and thus any action $a_t^*$ derived from human feedback would be invalid. {In this case, the system selects the next available option without further validation and continues parsing.}

A possible training strategy to handle case (3)\nop{in this case} may set $w_t$ to be negative, similar to \citet{Welleck2020Neural}. However, empirically we find this strategy fails to train the parser to correct its mistake in generating ``\texttt{School/Club Team}'' but rather disturbs the model training. 
By setting $w_t=0$, the impact of unaddressed actions is removed from training.
A similar solution is also adopted in \citet{petrushkov2018learning, kreutzer2019self}. As shown in Section~\ref{sec:experiments}, this way of training weight assignment enables stable improvement in iterative model learning while requiring fewer user annotations.

\section{Theoretical Analysis} \label{sec:theoretical_analysis}

While \neil enjoys the benefit of learning from a small amount of user feedback, one crucial question is whether it can still achieve the same level of performance as the traditional supervised approach (which trains a policy on full expert annotations, if one could afford that and manage the privacy risk). In this section, we prove that the performance gap between the two approaches is mainly determined by the learning policy's probability of trusting a confident action that turns out to be wrong, which can be controlled in practice.

Our analysis follows prior work ~\cite{ross2010efficient,ross2011reduction} to assume a unified trajectory length $T$ and an infinite number of training samples in each iteration (i.e., $m=\infty$ in Algorithm~\ref{alg:mispl_general}), such that the state space can be full explored by the learning policy. An analysis under the ``finite sample'' case can be found in Appendix~\ref{app:subsec:finite}.

\subsection{Cost Function for Analysis}

Unlike typical imitation learning tasks (e.g., Super Tux Kart \cite{ross2011reduction}), in semantic parsing, there exists only one gold trajectory semantically identical to the question.\footnote{We assume a canonical order for swappable components in a parse. In practice, it may be possible, though rare, for one question to have multiple gold parses.} Whenever a policy action is different from the gold one, the whole trajectory will not yield the correct semantic meaning and the parsing is deemed failed. {In other words, a well-performing semantic parser should be able to keep staying in the correct trajectory during the parsing.} Therefore, for theoretical analysis, we only analyze a policy's performance when it is conditioned on a \emph{gold partial parse}, i.e., $s_t \in d_{\pi^*}^t$, where $d_{\pi^*}^t$ is the state distribution in step $t$ when executing the expert policy $\pi^*$ for first $t$-1 steps. 
Let $\ell(s, \hat{\pi}) = 1 - p_{\hat{\pi}}(a = a^*|s)$ be the loss of $\hat{\pi}$ making a mistake at state $s$.
We define the \emph{cost} (i.e., the inverse \emph{test-time} quality) of a policy as:
\begin{equation}
    J(\hat{\pi}) = T \mathbb{E}_{s \sim d_{\pi^*}}\big[\ell(s,\hat{\pi}) \big],\label{eqn:3}
    \vspace{-2mm}
\end{equation}
where $d_{\pi^*} = \frac{1}{T}\sum_{t=1}^T d_{\pi^*}^t$ denotes the average expert state distribution (assuming time step $t$ is a random variable uniformly sampled from $1 \sim T$). A detailed derivation is shown in Appendix~\ref{app:cost_function}.

{The \emph{better} a policy $\hat{\pi}$ is, the \emph{smaller} this cost becomes. Note that, by defining Eq.~\eqref{eqn:3}, we simplify the analysis from evaluating \emph{the whole trajectory sampled from $\hat{\pi}$} (as we do in experiments) to evaluating \emph{the expected single-step loss of $\hat{\pi}$ conditioned on a gold partial parse}. This cost function makes the analysis easier and meanwhile reflects a consistent relative performance among algorithms for comparison. 
Next, we compare our \neil algorithm with the supervised approach by analyzing the upper bounds of their costs.
}

\subsection{Cost Bound of Supervised Approach}
A fully supervised system trains a parser on expert-annotated $(q, a_{1:T}^*)$ pairs, where the gold semantic parse $a_{1:T}^*$ can be viewed as generated by executing the expert policy $\pi^*$. This gives the policy $\hat{\pi}_{sup}$:
\begin{equation*}
    \hat{\pi}_{sup} = \argmin_{\pi \in \Pi} \mathbb{E}_{s \sim d_{\pi^*}}[l(s, \pi)],
    \vspace{-2mm}
\end{equation*}
where $\Pi$ is the policy space induced by the model architecture. A detailed derivation in Appendix~\ref{app:sup_cost_bound} shows the cost bound of the supervised approach:
\begin{theorem}\label{theorem:sup}
    For supervised approach, let $\epsilon_N = \min_{\pi \in \Pi} \mathbb{E}_{s \sim d_{\pi^*}}[l(s, \pi)]$, then $J(\hat{\pi}_{sup}) = T \epsilon_N$.
\end{theorem}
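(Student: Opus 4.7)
The plan is to unfold the definitions of $J(\cdot)$, $\hat{\pi}_{sup}$, and $\epsilon_N$ and show the claim reduces to a one-line substitution. There is no real combinatorial or probabilistic obstacle here; the result is essentially a tautology that isolates the ``no distribution mismatch'' property of fully supervised training in the analytic framework of Section~\ref{sec:theoretical_analysis}.

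First I would recall that by the cost function defined in Eq.~\eqref{eqn:3}, for any policy $\hat{\pi}$ we have
\[
J(\hat{\pi}) \;=\; T \, \mathbb{E}_{s \sim d_{\pi^*}}\!\bigl[\ell(s, \hat{\pi})\bigr].
\]
Then I would invoke the characterization of the supervised learner: under the infinite-sample assumption (so the empirical training objective converges to its population counterpart) and because the gold trajectory $a_{1:T}^*$ is generated by executing $\pi^*$, the states appearing in the training set are exactly distributed according to $d_{\pi^*}$. Hence the supervised training objective is precisely $\mathbb{E}_{s \sim d_{\pi^*}}[\ell(s, \pi)]$, and its minimizer over the hypothesis class $\Pi$ is $\hat{\pi}_{sup}$, achieving the minimum value $\epsilon_N = \min_{\pi \in \Pi} \mathbb{E}_{s \sim d_{\pi^*}}[\ell(s,\pi)]$ by definition.

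Finally, substituting $\hat{\pi} = \hat{\pi}_{sup}$ into the cost function and using the two equalities above gives
\[
J(\hat{\pi}_{sup}) \;=\; T \, \mathbb{E}_{s \sim d_{\pi^*}}\!\bigl[\ell(s, \hat{\pi}_{sup})\bigr] \;=\; T \epsilon_N,
\]
which is the claimed bound (in fact an equality). The only step that deserves care, and which I would flag as the ``hard part'' even though it is a bookkeeping point rather than a technical obstacle, is justifying that the states on which the supervised loss is evaluated truly follow $d_{\pi^*}$: this relies on (i) the canonical-order assumption that makes the gold trajectory unique so $\pi^*$ induces a well-defined state distribution, and (ii) the infinite-sample (full-state-coverage) idealization stated at the beginning of Section~\ref{sec:theoretical_analysis}. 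Once these are in place, the theorem follows immediately, and it will serve as the clean baseline against which the \neil cost bound (with its compounding term) is compared in the next subsection.
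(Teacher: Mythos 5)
Your proof is correct and follows essentially the same route as the paper's own derivation in Appendix~\ref{app:sup_cost_bound}: unfold $J(\cdot)$ via Eq.~\eqref{eqn:3}, characterize $\hat{\pi}_{sup}$ as the minimizer of $\mathbb{E}_{s \sim d_{\pi^*}}[\ell(s,\pi)]$ under the infinite-sample assumption, and substitute to get the exact equality $J(\hat{\pi}_{sup}) = T\epsilon_N$. Your flagged ``hard part''---that training states are genuinely distributed as $d_{\pi^*}$ because gold trajectories are rollouts of $\pi^*$---is precisely the observation the paper makes when explaining why the bound is an equality, so nothing is missing.
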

The theorem gives an exact bound (as shown by the equality) since the supervised approach, given the ``infinite sample'' assumption, trains a policy under the same state distribution $d_{\pi^*}$ as the one being evaluated in the cost function (Eq.~\eqref{eqn:3}).

\subsection{Cost Bound of \neil Algorithm}
\label{subsec:obj_bound}

Recall that, in each training iteration, \neil samples trajectories by executing actions from both the previously learned policy $\hat{\pi}_i$ and the expert policy $\pi^*$ (when an interaction is requested). Let $\pi_i$ denote such a ``mixture'' policy. We derive the following cost bound of a \neil policy $\hat{\pi}$:
\begin{equation}
    J(\hat{\pi}) \leq \frac{T}{N} \sum_{i=1}^N \big[ \mathbb{E}_{s \sim d_{\pi_i}}[\ell(s, \hat{\pi}_i)] + \ell_{max} ||d_{\pi_i} - d_{\pi^*}||_1 \big].\nonumber
\end{equation}
The bound is determined by two terms. The first term $\mathbb{E}_{s \sim d_{\pi_i}}[\ell(s, \hat{\pi}_i)]$ calculates the expected training loss of $\hat{\pi}_i$. Notice that, while the policy is \emph{trained} on states induced by the mixture policy ($s \sim d_{\pi_i}$), what matters to its \emph{test-time} quality is the policy's performance conditioned on a gold partial parse ($s \sim d_{\pi^*}$ in Eq.~\eqref{eqn:3}). 
This state discrepancy, which does not exist in the supervised approach, explains the performance loss of \neil, and is bounded by the second term $\ell_{max} ||d_{\pi_i} - d_{\pi^*}||_1$, the weighted $L_1$ distance between $d_{\pi_i}$ and $d_{\pi^*}$. 
{To bound the two terms, we employ a ``no-regret'' assumption \cite[see Appendix~\ref{app:no-regret}--\ref{app:mispl_cost_bound} for details]{kakade2009generalization, ross2011reduction}, which gives the theorem:}
\begin{theorem}\label{theorem:mispl}
    For the proposed \neil algorithm, if $N$ is $\Tilde{O}(T)$, there exists a policy $\hat{\pi} \in \hat{\pi}_{1:N}$ s.t. $J(\hat{\pi}) \leq T \big[ \epsilon_N + \frac{2T \ell_{max}}{N} \sum_{i=1}^N e_i \big] + O(1)$.
\end{theorem}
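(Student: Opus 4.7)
The plan is to start from the per-iteration upper bound stated just before the theorem,
$$J(\hat{\pi}) \leq \frac{T}{N}\sum_{i=1}^N \Bigl[\mathbb{E}_{s\sim d_{\pi_i}}[\ell(s,\hat{\pi}_i)] + \ell_{max}\,\|d_{\pi_i}-d_{\pi^*}\|_1\Bigr],$$
and bound the two summands separately, then combine them and pick the best iterate. The first summand is the in-sample training loss of $\hat{\pi}_i$ on the aggregated data, so it is amenable to the standard Dataset-Aggregation / no-regret argument of \citet{ross2011reduction}. The second summand measures how far the trajectory distribution induced by the mixture policy $\pi_i$ drifts from the expert distribution; I would control this distribution drift in terms of $e_i$, the probability that $\hat{\pi}_i$ assigns high confidence to a wrong action on a gold partial parse.

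First I would handle the no-regret term. Treating each iteration $i$ as a round in an online learning game whose per-round loss functional is $f_i(\pi)=\mathbb{E}_{s\sim d_{\pi_i}}[\ell(s,\pi)]$, the supervised retraining step in Line~7 of Algorithm~\ref{alg:mispl_general} plays the best response on the aggregated data. Under the no-regret assumption (Appendix~\ref{app:no-regret}), the average played loss satisfies
$$\frac{1}{N}\sum_{i=1}^N f_i(\hat{\pi}_i) \;\leq\; \min_{\pi\in\Pi}\frac{1}{N}\sum_{i=1}^N f_i(\pi) + \gamma_N,$$
with a regret $\gamma_N=O(1/N)$ when $N=\tilde{O}(T)$. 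The right-hand minimum is in turn upper bounded by $\epsilon_N=\min_{\pi\in\Pi}\mathbb{E}_{s\sim d_{\pi^*}}[\ell(s,\pi)]$ up to a term that is again absorbed into the drift, or directly by swapping the order of min and sum. This produces the $T\epsilon_N + O(1)$ contribution.

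Next I would bound $\|d_{\pi_i}-d_{\pi^*}\|_1$ by $2T e_i$. The mixture policy $\pi_i$ behaves as follows at each step: if $\hat{\pi}_i$ is uncertain the expert action is executed, and otherwise $\hat{\pi}_i$'s own confident action is executed. Conditioned on the current partial parse being gold, the only event that can make $\pi_i$ deviate from $\pi^*$ at step $t$ is that $\hat{\pi}_i$ picks a confident but wrong action, an event whose probability is at most $e_i$ by definition. Let $p_t$ be the probability that the trajectory sampled from $\pi_i$ has already diverged from the expert by step $t$; since a divergence persists once it occurs (semantic parses are integral, so no recovery) and a fresh divergence happens with probability at most $e_i$, one gets $p_{t+1}\le p_t+e_i$, hence $p_t\le (t-1)e_i\le T e_i$. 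Averaging over steps and using the standard inequality $\|d_{\pi_i}-d_{\pi^*}\|_1 \le 2\max_t p_t$ (total-variation characterization of coupling failure) yields $\|d_{\pi_i}-d_{\pi^*}\|_1 \le 2Te_i$. Plugging this into the drift term converts $\frac{T\ell_{max}}{N}\sum_i \|d_{\pi_i}-d_{\pi^*}\|_1$ into $\frac{2T^2\ell_{max}}{N}\sum_i e_i$, and choosing the best $\hat{\pi}\in\hat{\pi}_{1:N}$ (which can only do better than the average) gives the theorem.

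The step I expect to be the main obstacle is justifying the $L_1$ distribution bound $\|d_{\pi_i}-d_{\pi^*}\|_1\le 2Te_i$ cleanly: one has to define the ``wrong confident action'' event precisely, argue that it is the only source of deviation under the expert-correction branch of \textsc{Parse\&Collect}, and then set up the coupling or telescoping argument over the $T$ steps without double-counting. A subtlety is that $e_i$ should be interpreted as an average-over-states quantity under $d_{\pi^*}$, so care is needed to switch between the per-step mistake probability and the averaged $e_i$ used in the theorem statement. Everything else (no-regret, choosing the best iterate, absorbing lower-order regret terms into $O(1)$ under $N=\tilde{O}(T)$) follows the DAgger template.
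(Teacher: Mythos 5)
Your proposal follows essentially the same route as the paper: the same two-term decomposition, the no-regret argument for the training-loss term, the bound $\|d_{\pi_i}-d_{\pi^*}\|_1\le 2Te_i$ for the drift term, and picking the best iterate with the regret absorbed into $O(1)$. Your coupling/telescoping proof of the drift bound ($p_{t+1}\le p_t+e_{it}$, deviations persist, sum to $Te_i$) is the same union-bound the paper executes via the product expansion $1-\prod_{t=1}^T x_t\le\sum_{t=1}^T(1-x_t)$ in Lemma~\ref{lemma:1}, so that is a presentational difference only, and you correctly flagged the per-step vs.\ averaged $e_i$ subtlety. One correction: in this theorem $\epsilon_N$ is \emph{defined} as the best loss in hindsight over the mixture distributions, $\min_{\pi\in\Pi}\frac{1}{N}\sum_{i=1}^N\mathbb{E}_{s\sim d_{\pi_i}}[\ell(s,\pi)]$, so your extra step relating it to the supervised quantity $\min_{\pi\in\Pi}\mathbb{E}_{s\sim d_{\pi^*}}[\ell(s,\pi)]$ is unnecessary --- and as stated it is also problematic: ``swapping min and sum'' yields $\frac{1}{N}\sum_i\min_\pi$, a \emph{lower} bound on the hindsight min rather than the supervised $\epsilon_N$, while the route through the drift term would count the $L_1$ distance twice and inflate the constant to $\frac{4T\ell_{max}}{N}\sum_i e_i$, breaking the stated bound. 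Dropping that step and stopping at the no-regret inequality, as the paper does, gives the theorem exactly.
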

Here, $\epsilon_N = \min_{\pi \in \Pi} \frac{1}{N} \sum_{i=1}^N \mathbb{E}_{s \sim d_{\pi_i}}[\ell(s, \pi)]$ denotes the best expected policy loss in \emph{hindsight}, and $e_i$ denotes the probability that $\hat{\pi}_i$ does not query the expert policy (i.e., being confident) but its own action is wrong under $d_{\pi^*}$.

{We note that a no-regret algorithm requires convexity of the loss function \cite{hazan2007logarithmic, kakade2009generalization}, which is not satisfied by neural network-based semantic parsers. 
In general, proving theorems under a non-convex case is not trivial. Therefore, we follow the common {practice}\nop{routine} (e.g., \citet{DBLP:journals/corr/KingmaB14, j.2018on}) to theoretically analyze the convex case while empirically demonstrating the performance of our \neil algorithm with non-convex loss functions (i.e., when it applies to neural semantic parsers). More accurate regret bound for non-convex cases will be studied in the future.}

\begin{remark}[Remarks.]
{Compared with the supervised approach (Theorem~\ref{theorem:sup}), \neil's cost bound additionally contains a term of $\frac{1}{N}\sum_{i=1}^N e_i$, which, as we expect, comes from the aforementioned state discrepancy. Intuitively, if a learning policy frequently executes its own but wrong actions in training, the resulting training states $d_{\pi_i}$ will greatly deviate from the gold ones $d_{\pi^*}$. }

This finding inspires us to restrict the performance gap by reducing the learning policy's error rate \emph{when it does not query the expert}. Empirically this can be achieved by: 
(1) \emph{Accurate confidence estimation}, so that actions deemed confident are generally correct, and (2) \emph{Moderate policy initialization}, such that in general the policy is less likely to make wrong actions throughout the iterative training.
For (1), we set a high confidence threshold $\mu$=0.95, which is demonstrated reliable for MISP \cite{yao2019model}. We then empirically validate (2) in experiments.
\end{remark}

\section{Experiments}\label{sec:experiments}
In this section, we conduct experiments to demonstrate the annotation efficiency of our \neil algorithm and that it can train semantic parsers for high performance when the parsers are reasonably initialized, which verifies our theoretical analysis.

\subsection{Experimental Setup}
We compare various systems on the WikiSQL dataset \cite{zhong2017seq2sql}. The dataset contains large-scale annotated question-SQL pairs (56,355 pairs for training) and thus serves as a good resource for experimenting iterative learning. For the base semantic parser, we choose SQLova \cite{hwang2019comprehensive}, one of the top-performing models on WikiSQL, to ensure a reasonable model capacity in terms of data utility along iterative training.

We experiment each system with three parser initialization settings, using 10\%, 5\% and 1\% of the total training data. During iterative learning, questions from the remaining training data arrive in a random order to simulate user questions and we simulate user feedback by directly comparing the synthesized query with the gold one. 
{In each iteration, all systems access exactly the same user questions. Depending on how they solicit feedback, each system collects a different number of annotations. At the end of each iteration, we update each system by retraining its parser on its accumulated annotations and the initial training data, and report its \emph{(exact) query match accuracy} on the test set. We also report the \emph{accumulated number of annotations} that each system has requested after each training iteration, in order to compare their annotation efficiency.} 

In experiments, we consider every 1,000 user questions as one training iteration (i.e., $m$=1,000 in Algorithm~\ref{alg:mispl_general}). We repeat the whole iterative training for three runs and report average results.
\emph{Reproducible details} are included in Appendix~\ref{app:implementation_details}.

\begin{figure*}[t!]
    \centering
    \includegraphics[width=0.95\linewidth]{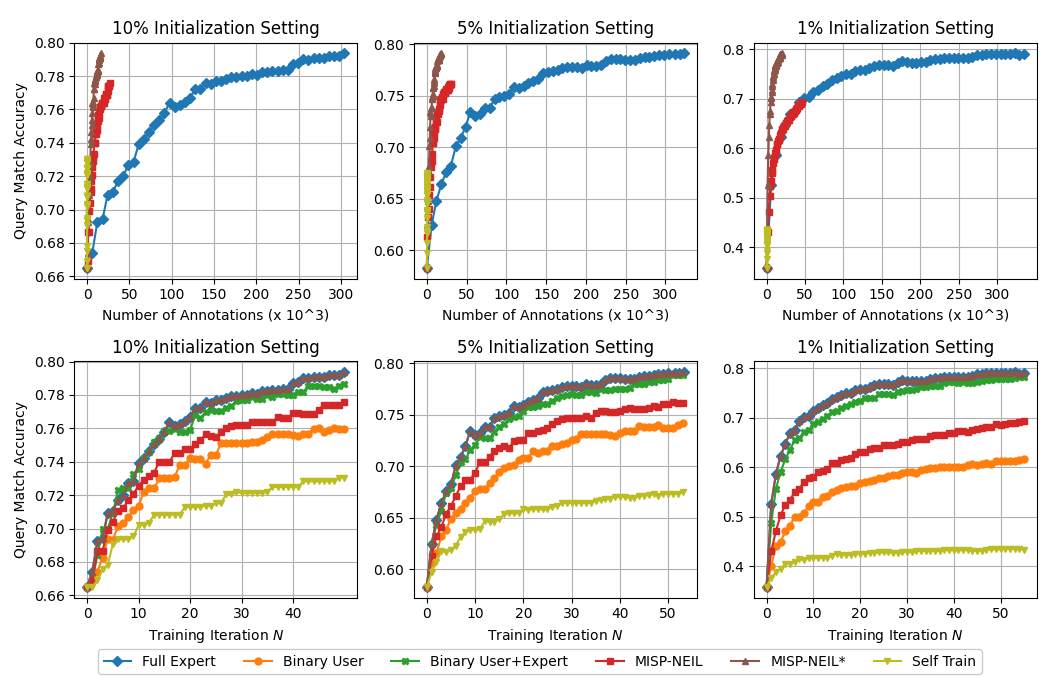}
    \caption{Parsing accuracy on WikiSQL test set when systems are trained with various numbers of user/expert annotations (top) and for different iterations (bottom). We experiment with three initialization settings, using 10\%, 5\% and 1\% of the training data respectively. Results on validation set can be found in Appendix~\ref{app:detailed_exp}.}
    \label{fig:exp}
\end{figure*}

\subsection{System Comparison}
We denote our system as \textbf{\mispl} since it leverages MISP in the back end of \neil.
We compare it with the traditional supervised approach (denoted as \textbf{Full Expert}). To investigate the skyline capability of our system, we also present a variant called \textbf{\mispls}, which is \emph{assumed} with perfect confidence measurement and interaction design, so that it can precisely identify and correct its mistakes during parsing. This is implemented by allowing the system to compare its synthesized query with the gold one. Note that this is not a realized automatic system; we show its performance as an upper bound of \mispl.

On the other hand, although execution feedback-based learning systems \cite{clarke2010driving, artzi2013weakly, iyer2017learning} may not be very practical for end users, we include them nonetheless in the interest of a comprehensive comparison. This leads to two baselines. The \textbf{Binary User} system requests binary user feedback on whether \emph{executing} the generated SQL query returns correct database results and collects only queries with correct results to further improve the parser. The \textbf{Binary User+Expert} system additionally collects full expert SQL annotations when the generated SQL queries do not yield correct answers.

{Given the completely different nature of annotations from Binary User (which validate the \emph{denotation}) and those from Full Expert and \mispl (which validate a semantic parse's \emph{constituents}), there may not exist a universally fair way to convert one's annotation consumption into the other's. Therefore, in the following sections, we only present and discuss Binary User(+Expert) in terms of their parsing accuracy under different training iterations. 
To give an estimation of their annotation efficiency for reference, we design a compromised annotation calculation metric for Binary User(+Expert) and include their results on WikiSQL validation set in Appendix~\ref{app:detailed_exp}. 
}

{Finally, while our \mispl and the aforementioned baselines all leverage feedback from users or domain experts, an interesting question is how much gain they could obtain compared with using no annotation or feedback at all. To this end, we compare the systems with a \textbf{Self Train} baseline \cite{scudder1965probability, nigam2000analyzing, mcclosky2006effective}. 
In each iteration, this baseline collects SQL queries generated \emph{by itself} as the new gold annotations for further training. We additionally apply a confidence threshold to improve the collection quality, i.e., only SQL queries with probability $p_{\hat{\pi}}(a_{1:T}|s_1)$ greater than 0.5 are included. This strategy empirically leads to better performance. Intuitively, we expect Self Train to perform no better than any other systems in our experiments, since no human feedback is provided to correct mistakes in its collection.}

\subsection{Experimental Results}\label{subsec:exp_results}

We evaluate each system by answering two research questions (RQs):
\begin{itemize}[noitemsep,topsep=1pt]
    \item \emph{RQ1: Can the system improve a parser without requiring a large amount of annotations?} 
    \item \emph{RQ2: For interactive systems, while requiring weaker supervision, can they train the parser to reach a performance comparable to the traditional supervised system?}
\end{itemize}

For \textit{RQ1}, we measure the number of \emph{user/expert} annotations that a system requires to train a parser. For Full Expert, this number equals the trajectory length of the gold query (e.g., 5 for the query in Figure~\ref{fig:intro}); for \mispl and \mispls, it is the number of user interactions during training. 
Note that while we do not differentiate the actual (e.g., time/financial) cost of users from that of experts in this aspect, we emphasize that our system enjoys an additional benefit of collecting training examples from a much cheaper and more abundant source. {For Self Train, the number of annotations is always zero since it does not request any human feedback for the online user questions.}

Our results in Figure~\ref{fig:exp} (top) demonstrate that \mispl consistently consumes a comparable or smaller amount of annotations to train the parser to reach the same parsing accuracy. 
Figure~\ref{fig:q_curve} in Appendix further shows that, on average, it requires no more than one interaction for each user question along the training. Particularly in the 10\% initialization setting, \mispl uses less than 10\% of the total annotations that Full Expert needs in the end. Given the limited size of WikiSQL training set, the simulation experiments currently can only show \mispl's performance under a small number of annotations. However, we expect this gain to continue as it receives more user questions in the long-term deployment.

To answer \textit{RQ2}, Figure~\ref{fig:exp} (bottom) compares each system's accuracy after they have been trained for the same number of iterations. The results demonstrate that when a semantic parser is moderately initialized (10\%/5\% initialization setting), \mispl can further improve it to reach a comparable accuracy as Full Expert (0.776/0.761 vs. 0.794 in the last iteration). 
In the extremely weak 1\% initialization setting (using only around 500 initial training examples), all interactive learning systems suffer from a huge performance loss. This is consistent with our finding in theoretical analysis (Section~\ref{sec:theoretical_analysis}). In Appendix~\ref{app:subsec:theoretical_exp}, we plot the value of $e_i$, the probability that $\hat{\pi}_i$ makes a confident but wrong decision given a gold partial parse, showing that a better initialized policy generally obtains a smaller $e_i$ throughout the training and thus a tighter cost bound.

Our system also surpasses Binary User. We find that the inferior performance of Binary User is mainly due to the ``spurious program'' issue \cite{guu2017language}, i.e., a SQL query having correct execution results can still be incorrect in terms of semantics.\nop{For example, ``\texttt{WHERE C1=A}'' and ``\texttt{WHERE C1=A and C2=B}'' may give the same execution results when all database records satisfying ``\texttt{C1=A}'' also meet ``\texttt{C2=B}'' by accident. However, semantically the latter includes an extra condition which may not be specified by the question.} \mispl circumvents this issue by directly validating the \emph{semantic meaning} of intermediate parsing decisions. {The performance of Binary User+Expert is close to Full Expert as it has additionally involved expert annotations on a considerable number of user questions, which on the other hand also leads to extra annotation overhead.}

When it is assumed with perfect interaction design and confidence estimator, \mispls shows striking superiority in both aspects. Since it always corrects wrong decisions immediately, \mispls can collect and derive the same training examples as Full Expert, and thus trains the parser to Full Expert's performance level in Figure~\ref{fig:exp} (bottom). However, it requires only 6\% of the annotations that Full Expert needs (Figure~\ref{fig:exp}, top). These observations imply large room for \mispl to be improved in the future. 

{Finally, we observe that all feedback-based learning systems outperform Self Train dramatically (Figure~\ref{fig:exp}, bottom). This verifies the benefit of {learning from human feedback}.}

\subsection{Generalize to Complex SQL Queries}\label{subsec:spider}
We next investigate whether \mispl can generalize to the complex SQL queries in the Spider dataset \cite{yu2018spider}, which can contain complicated keywords like \texttt{GROUP BY}.
For the base semantic parser, we choose EditSQL \cite{zhang2019editing}, one of the open-sourced top models on Spider. Given the small size of Spider (7,377 question-SQL pairs for training after data cleaning; see Appendix~\ref{app:subsec:editsql} for details), we only experiment with one initialization setting, using 10\% of the training set.
Since \nop{all Spider models}EditSQL does not predict the specific values in a SQL query (e.g., ``\texttt{jalen rose}'' in Figure~\ref{fig:intro}), we cannot execute the generated query to simulate the binary execution feedback. Therefore, we only compare our system with Full Expert and Self Train. 
Parsers are evaluated on Spider Dev set since its test set is not publicly available.

\begin{figure}
    \centering
    \includegraphics[width=0.9\linewidth]{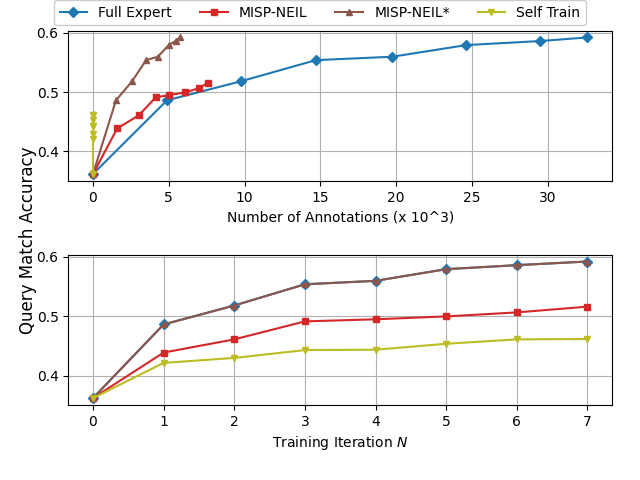}
    \caption{Parsing accuracy on Spider Dev set when systems are trained with various numbers of user/expert annotations and for different iterations.}
    \label{fig:editsql}
\vspace{-4mm}
\end{figure}

Figure~\ref{fig:editsql} (top) shows that \mispl and \mispls consistently achieve comparable or better annotation efficiency while enjoying the advantage of learning from end user interaction. We expect this superiority to continue as the systems receive more user questions beyond Spider. Meanwhile, we also notice that the gain is smaller and \mispl suffers from a large performance loss compared with Full Expert (Figure~\ref{fig:editsql}, bottom), due to the poor parser initialization and the SQL query complexity. This can be addressed via adopting better interaction designs and more accurate confidence estimation, as shown by \mispls. {Similarly as in WikiSQL experiments, Self Train performs worse than human-in-the-loop learning systems, as there is no means to correct wrong predictions in its collected annotations.}

\section{Conclusion and Future Work}\label{sec:future_work}

Our work shows the possibility of continually learning semantic parsers from fine-grained end user interaction. As a pilot study, we experiment systems with simulated user interaction. One important future work is thus to conduct large-scale user studies and train parsers from real user interaction. This is not trivial and has to account for uncertainties such as noisy user feedback. We also plan to derive a more realistic formulation of user/expert annotation costs by analyzing real user statistics (e.g., average time spent on each question).

In experiments, we observe that neural semantic parsers tend to be overconfident and training them with more data does not mitigate this issue. In the future, we will look into more accurate confidence measure via neural network calibration \cite{guo2017calibration} or using machine learning components (e.g., answer triggering \cite{zhao2017end} or a reinforced active selector \cite{fang2017learning}).

Finally, we believe our algorithm can be applied to save annotation effort for other NLP tasks, especially the low-resource ones \cite{mayhew2019named}.

\section*{Acknowledgments}
We would like to thank Khanh Nguyen and the anonymous reviewers for their helpful discussions or comments.
{The research conducted by the Ohio State University authors was sponsored in part by the Army Research Office under cooperative agreements W911NF-17-1-0412, NSF Grant IIS1815674, NSF CAREER \#1942980, Fujitsu gift grant, and Ohio Supercomputer Center \cite{OhioSupercomputerCenter1987}. The views and conclusions contained herein are those of the authors and should not be interpreted as representing the official policies, either expressed or implied, of the Army Research Office or the U.S. Government. The U.S. Government is authorized to reproduce and distribute reprints for Government purposes notwithstanding any copyright notice herein.}
The Spider dataset~\cite{yu2018spider} is distributed under the CC BY-SA 4.0 license.

\bibliography{emnlp2020} 
\bibliographystyle{acl_natbib}

\clearpage
\appendix

\section{Theoretical Analysis in Infinite Sample Case}\label{app:infinite}

In this section, we give a detailed theoretical analysis to derive the cost bounds of the supervised approach and our proposed \neil algorithm (Section~\ref{sec:model}). 
Following \citet{ross2011reduction}, we first focus the proof on an \emph{infinite} sample case, which assumes an infinite number of samples to train a policy in each iteration (i.e., $m=\infty$ in Algorithm~\ref{alg:mispl_general}), such that the state space in training can be full explored by the learning policy.

As an overview, we start the analysis by introducing the ``cost function'' we use to analyze each policy in Appendix~\ref{app:cost_function}, which represents an inverse quality of a policy. In Appendix~\ref{app:sup_cost_bound}, we derive the bound of the cost of the supervised approach. Appendix~\ref{app:no-regret} and Appendix~\ref{app:mispl_cost_bound} then discuss the cost bound of our proposed \neil algorithm. Finally, in Appendix~\ref{app:subsec:finite}, we show the cost bound of \neil in \emph{finite} sample case.

\subsection{Cost Function for Analysis}\label{app:cost_function}
In a semantic parsing task, whenever a policy action is different from the gold one, the whole trajectory cannot yield the correct semantic meaning and the parsing is deemed failed. Therefore, we analyze a policy's performance only when it is conditioned on \emph{a gold partial parse}. Intuitively, a policy with better quality should have a higher parsing accuracy under a gold partial parse, so that it is more likely to sample a completely correct trajectory in inference time.

Given a question $q$ and denoting $a_{1:t}^*$ as the gold partial trajectory sampled by the expert policy $\pi^*$, we first define the \emph{cost} of sampling a partial trajectory $a_{1:t}=(a_1, ..., a_t)$ as:
\begin{equation*}
C(q, a_{1:t}) = 
\begin{cases}
	0 & \text{if}\;\; a_{1:t} = a_{1:t}^* \\
    1 & \text{otherwise}
\end{cases}.
\end{equation*}
In other words, a sampled partial trajectory is correct if and only if it is the same as the gold partial parse.
Based on this definition, we further define the expected cost of policy $\hat{\pi}$ \emph{in a single time step $t$} (given the question $q$) as:
\begin{align*}
    C_{\hat{\pi}}^t(q) &= \mathbb{E}_{a_{1:t-1} \sim \pi^*} \mathbb{E}_{a_t\sim \hat{\pi}} [C(q, a_{1:t})] \\
    &= \mathbb{E}_{a_{1:t-1} \sim \pi^*}[1 - p_{\hat{\pi}}(a_t=a_t^*|q, a_{1:t-1})].
\end{align*}
Here, $a_{1:t-1} \sim \pi^*$ denotes a gold partial parse till the ($t$-1)-th step, which is obtained by executing the expert policy $\pi^*$ for the first $t$-1 steps (given $q$), and $p_{\hat{\pi}}(a_t=a_t^*|q, a_{1:t-1})$ denotes the probability that $\hat{\pi}$ samples action $a_t^*$ given a question $q$ and a partial parse $a_{1:t-1}$. By taking an expectation over all questions $q \in \mathcal{Q}$, we have the following derivations:
\begin{align}
\scriptstyle
\mathbb{E}_{q \in \mathcal{Q}} [C_{\hat{\pi}}^t(q)]
&= \scriptstyle \mathbb{E}_{q \in \mathcal{Q}, a_{1:t-1} \sim \pi^*} [1 - p_{\hat{\pi}}(a_t=a_t^*|q, a_{1:t-1})]  \nonumber\\
&= \scriptstyle \mathbb{E}_{s_t \sim d_{\pi^*}^t}[1 - p_{\hat{\pi}}(a_t=a_t^*|s_t)].\nonumber
\end{align}
The second equality holds by the definition $s_t=(q,a_{1:t-1})$, and $d_{\pi^*}^t$ is the ``expert state distribution'' in step $t$ when executing the expert policy $\pi^*$ for first $t$-1 steps. In this analysis, we follow \citet{ross2010efficient, ross2011reduction} to assume a unified decision length $T$. By summing up the above expected cost over the $T$ steps, we define the \emph{cost} (i.e., the inverse \emph{test-time} quality) of policy $\hat{\pi}$:
\begin{align*}
\vspace{-2mm}
    J(\hat{\pi}) &= \sum_{t=1}^T \mathbb{E}_{q \in \mathcal{Q}} [C_{\hat{\pi}}^t(q)] \\
    &=  \sum_{t=1}^T \mathbb{E}_{s_t \sim d_{\pi^*}^t}[1 - p_{\hat{\pi}}(a_t=a_t^*|s_t)].
\end{align*}
Denote $\ell(s, \hat{\pi}) = 1 - p_{\hat{\pi}}(a=a^*|s), a \sim \hat{\pi}(s), a^* \sim \pi^*(s)$ as the ``single-step loss function'', which is bounded within $[0,1]$, then the cost of policy $\hat{\pi}$ can be simplified as:
\begin{align}
    J(\hat{\pi}) &= \sum_{t=1}^T \mathbb{E}_{s_t \sim d_{\pi^*}^t} \big[ \ell(s_t,\hat{\pi}) \big] \nonumber\\
    &= T\mathbb{E}_{t \sim \mathcal{U}(1,T)} \mathbb{E}_{s_t \sim d_{\pi^*}^t} \big[ \ell(s_t,\hat{\pi}) \big]\nonumber\\
    &= T \mathbb{E}_{s \sim d_{\pi^*}}\big[\ell(s,\hat{\pi}) \big],\label{eqn:app:obj}
\end{align}
where $d_{\pi^*} = \frac{1}{T}\sum_{t=1}^T d_{\pi^*}^t$ is the average expert state distribution, when we assume the time step $t$ to be a random variable under the uniform distribution $\mathcal{U}(1,T)$ (the second equality).

\subsection{Cost Bound of Supervised Approach}\label{app:sup_cost_bound}
In this section, we analyze the cost bound of the supervised approach.
Recall that the supervised approach trains a policy $\hat{\pi}$ using the standard supervised learning algorithm with supervision from $\pi^*$ at every decision step. Therefore, it finds the best policy $\hat{\pi}_{sup}$ on infinite samples as:
\begin{equation}\label{eqn:app:sup_obj}
    \hat{\pi}_{sup} = \argmin_{\pi \in \Pi} \mathbb{E}_{s \sim d_{\pi^*}}[\ell(s, \pi)],
\end{equation}
where $\Pi$ denotes the policy space induced by the model architecture, and the expectation over $s$ is sampled from the whole $d_{\pi^*}$ state space because of the ``infinite sample'' assumption. The supervised approach thus obtains the following cost bound:
\begin{equation*}
\begin{split}
    J(\hat{\pi}_{sup}) =& T \mathbb{E}_{s \sim d_{\pi^*}}[\ell(s, \hat{\pi}_{sup})]\\ = & T \min_{\pi \in \Pi}\mathbb{E}_{s \sim d_{\pi^*}}[\ell(s, \pi)].
\end{split}
\end{equation*}
This gives the following theorem:
\begin{theorem}\label{theorem:app:sup}
    For supervised approach, let $\epsilon_N = \min_{\pi \in \Pi} \mathbb{E}_{s \sim d_{\pi^*}}[\ell(s, \pi)]$, then $J(\hat{\pi}_{sup}) = T \epsilon_N$.
\end{theorem}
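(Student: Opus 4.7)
The plan is to observe that the statement is essentially a direct substitution, made possible by the fact that under the "infinite sample" assumption the training state distribution for the supervised learner coincides with the evaluation state distribution used in the cost function. Once that coincidence is articulated, the equality $J(\hat{\pi}_{sup}) = T\epsilon_N$ falls out without any inequality slack, which is why the theorem asserts equality rather than an upper bound.

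Concretely, I would proceed in three short steps. First, I would recall the definition of the cost from Eq.~\eqref{eqn:app:obj}, instantiated at the supervised policy:
\begin{equation*}
J(\hat{\pi}_{sup}) = T\,\mathbb{E}_{s\sim d_{\pi^*}}\!\big[\ell(s,\hat{\pi}_{sup})\big].
\end{equation*}
Second, I would invoke the definition of $\hat{\pi}_{sup}$ given in Eq.~\eqref{eqn:app:sup_obj}, namely that $\hat{\pi}_{sup}$ is the minimizer over $\Pi$ of the same expected loss $\mathbb{E}_{s\sim d_{\pi^*}}[\ell(s,\pi)]$; therefore the value attained at $\hat{\pi}_{sup}$ equals $\min_{\pi\in\Pi}\mathbb{E}_{s\sim d_{\pi^*}}[\ell(s,\pi)] = \epsilon_N$. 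Third, combining the two identities yields $J(\hat{\pi}_{sup}) = T\epsilon_N$.

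The only conceptual point worth flagging, rather than a technical obstacle, is to make explicit \emph{why} equality holds here while the analogous \neil result will only give an inequality. The reason is that in the supervised case there is no state-distribution mismatch: both the objective minimized during training and the quantity appearing in $J$ are expectations under $d_{\pi^*}$, so no $\lVert d_{\pi_i}-d_{\pi^*}\rVert_1$ term arises, and the infinite-sample assumption removes the statistical gap between the empirical minimizer and the population minimizer. I would close with a brief remark noting that any departure from these assumptions (finite samples, or training under a non-expert roll-in distribution) would convert the equality into a strict upper bound, motivating the more involved analysis carried out for \neil in the subsequent subsections.
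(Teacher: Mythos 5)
Your proposal is correct and follows essentially the same route as the paper: substitute $\hat{\pi}_{sup}$ into the cost definition (Eq.~\eqref{eqn:app:obj}) and use the fact that, under the infinite-sample assumption, $\hat{\pi}_{sup}$ minimizes the very same expectation $\mathbb{E}_{s \sim d_{\pi^*}}[\ell(s,\pi)]$, yielding $J(\hat{\pi}_{sup}) = T\epsilon_N$ with equality. Your closing observation about why equality holds here but not for \neil (no state-distribution mismatch between training and evaluation) is exactly the point the paper itself makes after stating the theorem.
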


The cost bound of the supervised approach represents its exact performance as implied by the equality. This is because the approach trains a policy (Eq.~\eqref{eqn:app:sup_obj}) under the same state distribution $d_{\pi^*}$ (given the ``infinite sample'' assumption) as in evaluation (Eq.~\eqref{eqn:app:obj}).
As we will show next, the proposed \neil algorithm breaks this consistency while enjoying the benefit of high annotation efficiency, which explains the performance gap.

\subsection{No-regret Assumption}\label{app:no-regret}
Before showing the cost bound of our \neil algorithm, we introduce a ``no-regret'' assumption \cite{kakade2009generalization, ross2011reduction} that is leveraged in the derivation.

\begin{assumption}\label{assumption:no-regret}
    No-regret assumption. Define $\ell_i(\pi) = \mathbb{E}_{s \sim d_{\pi_i}}[l(s, \pi)]$ and $\epsilon_N = \min_{\pi \in \Pi} \frac{1}{N} \sum_{i=1}^N \ell_i(\pi)$, then
    $$
    \frac{1}{N}\sum_{i=1}^N \ell_i(\hat{\pi}_i) - \epsilon_N \leq \gamma_N
    $$
    for $\lim_{N \rightarrow \infty} \gamma_N = 0$ (usually $\gamma_N \in \Tilde{O}(\frac{1}{N})$).
\end{assumption}
This assumption characterizes an important policy learning pattern: As a policy is trained for an infinite number of iterations, on average, its expected training loss ($\frac{1}{N}\sum_{i=1}^N \ell_i(\hat{\pi}_i)$) will converge to the loss of the best policy in hindsight ($\epsilon_N$). In our scenario, this assumption implies that, while our policy is trained on online labels from both the expert policy $\pi^*$ (when it is queried) and the previously learned policy $\hat{\pi}_i$ (when the agent is confident), it still gradually fits to the best policy over the same state space in training ($d_{\pi_i}$). In other words, the likely noisy labels from $\hat{\pi}_i$ do not harm the model fitting to the expert policy in general.

Many no-regret algorithms \cite{hazan2007logarithmic,kakade2009generalization} that guarantee $\gamma_N \in \Tilde{O}(\frac{1}{N})$ require convexity or strong-convexity of the loss function. However, the loss function used in our application, which is built on the top of a deep neural network model, does not satisfy this requirement. 
{In general, proving theorems under a non-convex case is not trivial. In this analysis, we follow the common practice (see \citet{DBLP:journals/corr/KingmaB14,j.2018on} for example) to theoretically analyze the convex case while empirically demonstrating the non-convex case.}
A more accurate regret bound for non-convex neural networks (which may result in a slower $\gamma_N$ convergence speed with respect to $N$) can be studied in the future.

\subsection{Cost Bound of \neil Algorithm}\label{app:mispl_cost_bound}

As shown in Algorithm~\ref{alg:mispl_general}, \neil produces a sequence of policies $\hat{\pi}_{1:N} = (\hat{\pi}_1, \hat{\pi}_2, ..., \hat{\pi}_N)$, where $N$ is the number of training iterations, and returns the one with the best test-time performance on validation set as $\hat{\pi}$. In training, the algorithm executes actions from both the learning policy $\hat{\pi}_i$ (when the model is confident) and the expert policy $\pi^*$. We denote this ``mixture'' policy as $\pi_i$. Then for the first $N$ iterations, we have the cost bound of \neil as:
\begin{align}\label{eqn:app:mispl_derivation}
    \scriptstyle
    J(\hat{\pi}) =& \scriptstyle \min_{\hat{\pi}^\prime \in \hat{\pi}_{1:N}} T \mathbb{E}_{s \sim d_{\pi^*}} \big[\ell(s, \hat{\pi}^\prime)\big] \nonumber\\
    \leq & \scriptstyle \frac{T}{N} \sum_{i=1}^N \mathbb{E}_{s \sim d_{\pi^*}} \big[\ell(s, \hat{\pi}_i)\big] \nonumber\\
    \leq & \scriptstyle \frac{T}{N} \sum_{i=1}^N \big[ \mathbb{E}_{s \sim d_{\pi_i}}[\ell(s, \hat{\pi}_i)] + \ell_{max} ||d_{\pi_i} - d_{\pi^*}||_1 \big].
\end{align}

From the last inequality, we can see that the cost bound of \neil is restricted by two terms. The first term $\mathbb{E}_{s \sim d_{\pi_i}}[\ell(s, \hat{\pi}_i)]$ denotes the expected loss of $\hat{\pi}_i$ under the states induced by $\pi_i$ \emph{during training} (under the ``infinite sample'' assumption, as mentioned in the beginning of the analysis). 
By applying the no-regret assumption (Assumption~\ref{assumption:no-regret}), this term can be bound by $\frac{1}{N}\sum_{i=1}^N \mathbb{E}_{s \sim d_{\pi_i}}[\ell(s, \hat{\pi}_i)] \leq \epsilon_N + \gamma_N$. Here, $\epsilon_N = \min_{\pi \in \Pi} \frac{1}{N} \sum_{i=1}^N \ell_i(\pi)$ denotes the best expected training loss in hindsight.

The second term denotes the $L_1$ distance between state distributions induced by $\pi_i$ and $\pi_i^*$, weighted by the maximum loss value $l_{max}$ that $\hat{\pi}_i$ encounters over the training. 
As we notice, unlike the supervised approach, \neil \emph{trains} a policy under $d_{\pi_i}$, while what matters to its \emph{test-time} quality is its performance on the state distribution $d_{\pi^*}$ (Eq.~\eqref{eqn:app:obj}). This discrepancy explains the performance loss of our algorithm compared to the supervised approach and is bounded by the aforementioned $L_1$ distance.
To further bound this term, we define $e_i$ as 
the probability that $\hat{\pi}_i$ makes a confident (i.e., without querying the expert policy) but wrong action under $d_{\pi^*}$,
and introduce the following lemma:
\begin{lemma}\label{lemma:1}
    $||d_{\pi_i} - d_{\pi^*}||_1 \leq  2 T e_i$.
\end{lemma}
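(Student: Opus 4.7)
The plan is to bound $||d_{\pi_i} - d_{\pi^*}||_1$ by a coupling argument that tracks the first time the mixture policy $\pi_i$ deviates from the expert $\pi^*$. Recall that $\pi_i$ is defined to query $\pi^*$ whenever $\hat{\pi}_i$ is uncertain and to execute $\hat{\pi}_i$'s own action otherwise, so $\pi_i$ and $\pi^*$ produce identical action distributions at every state \emph{except} when $\hat{\pi}_i$ is confident but wrong. That event, restricted to the gold state distribution $d_{\pi^*}^t$, is what $e_i$ measures after averaging over $t$.

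First I would introduce per-step quantities $e_i^t := \Pr_{s \sim d_{\pi^*}^t}[\hat{\pi}_i \text{ is confident but wrong at } s]$, so that by definition $e_i = \tfrac{1}{T}\sum_{t=1}^T e_i^t$. Let $p_t$ be the probability that, when rolling out $\pi_i$ from step $1$ to step $t{-}1$, no confident-but-wrong action has been taken (equivalently, $\pi_i$ has agreed with $\pi^*$ on all actions so far). Conditioned on this ``no-deviation'' event, the trajectory of $\pi_i$ is distributed exactly like a trajectory of $\pi^*$, so the state distribution at step $t$ decomposes as
\begin{equation*}
d_{\pi_i}^t = p_t\, d_{\pi^*}^t + (1 - p_t)\, d_t',
\end{equation*}
for some residual distribution $d_t'$ over states reachable after at least one deviation. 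A direct triangle-inequality calculation then gives $||d_{\pi_i}^t - d_{\pi^*}^t||_1 = (1-p_t)\,||d_t' - d_{\pi^*}^t||_1 \le 2(1-p_t)$.

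Next I would bound $1 - p_t$ by a union bound over the first $t{-}1$ steps. On the no-deviation event up through step $s{-}1$, the marginal state at step $s$ is exactly $d_{\pi^*}^s$, and the per-step probability of a fresh deviation is $e_i^s$; so $1 - p_t \le \sum_{s=1}^{t-1} e_i^s$. Averaging over $t$ via $d_{\pi^*} = \tfrac{1}{T}\sum_t d_{\pi^*}^t$ and $d_{\pi_i} = \tfrac{1}{T}\sum_t d_{\pi_i}^t$, and using the triangle inequality for the $L_1$ norm, yields
\begin{equation*}
||d_{\pi_i} - d_{\pi^*}||_1 \;\le\; \frac{2}{T}\sum_{t=1}^T (1 - p_t) \;\le\; \frac{2}{T}\sum_{t=1}^T \sum_{s=1}^{t-1} e_i^s \;\le\; 2 \sum_{s=1}^{T} e_i^s \;=\; 2 T e_i,
\end{equation*}
where the last equality uses the definition of $e_i$. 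This is the stated bound.

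The main obstacle I anticipate is making the ``no-deviation'' coupling fully rigorous: one must verify that conditioning on the event that $\pi_i$ has never taken a confident-but-wrong action up to step $t{-}1$ really does reproduce the expert's marginal $d_{\pi^*}^t$, rather than some reweighting of it. This is a standard fact once one writes out the trajectory distributions step by step (since on this event $\pi_i$ and $\pi^*$ agree on every action sampled so far), but it requires care, especially because $e_i^s$ is itself defined with respect to $d_{\pi^*}^s$, so the union bound is only valid along trajectories that have stayed on the gold path. The rest of the derivation is an exchange-of-summation identity and is routine.
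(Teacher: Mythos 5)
Your proof is correct and, at bottom, rests on the same idea as the paper's, just bookkept per step rather than over the whole trajectory: the paper writes the averaged distribution as a mixture $d_{\pi_i} = \prod_{t=1}^T\bigl(\beta_{it} + (1-\beta_{it})(1-\tilde{\epsilon}_{it})\bigr)\, d_{\pi^*} + \bigl(1-\prod_{t=1}^T(\cdots)\bigr)\, d$, where $\beta_{it}$ is the probability of querying the expert under $d_{\pi^*}^t$ and $\tilde{\epsilon}_{it}$ the conditional error rate when confident, then applies $1-\prod_t x_t \le \sum_t (1-x_t)$ together with $\tilde{\epsilon}_{it}(1-\beta_{it}) = e_{it}$; your union bound over the first deviation time, $1-p_t \le \sum_{s<t} e_i^s$, is exactly what that elementary inequality encodes, and your per-step treatment is if anything slightly tighter, since only deviations before step $t$ can shift $d_{\pi_i}^t$ (you implicitly get $\frac{2}{T}\sum_{s}(T-s)\,e_i^s$ before relaxing to $2Te_i$). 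The obstacle you flag at the end is real but harmless, and worth stating precisely: conditioning on the no-deviation event does \emph{not} reproduce $d_{\pi^*}^t$ --- it reweights the question distribution toward questions on which $\hat{\pi}_i$ survives --- so your displayed mixture $d_{\pi_i}^t = p_t\, d_{\pi^*}^t + (1-p_t)\, d_t'$ is not literally correct. The repair is to drop the mixture claim and argue by coupling: run the rollout and the gold trajectory on the same question, observe that the two state sequences coincide until the first confident-but-wrong action, and invoke $\|\mathcal{L}(X)-\mathcal{L}(Y)\|_1 \le 2\Pr[X\neq Y]$, which gives $\|d_{\pi_i}^t - d_{\pi^*}^t\|_1 \le 2(1-p_t)$ without any claim about the conditional law; your union bound itself is safe as you set it up, because a first deviation at step $s$ entails being at the gold state at step $s$, so its probability is at most $e_i^s$ as defined under $d_{\pi^*}^s$. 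It is fair to note that the paper's own decomposition carries the symmetric informality --- the product form treats the per-step survival events as independent, which they are not, being correlated through the question --- so both write-ups are repaired by the same union-bound/coupling reading, and both terminate in $\|d_{\pi_i} - d_{\pi^*}\|_1 \le 2\sum_{t=1}^T e_{it} = 2Te_i$.
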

\begin{proof}
Let $\beta_{it}$ be the probability of querying the expert policy under $d_{\pi^*}^t$, $\Tilde{\epsilon}_{it}$ the error rate of $\hat{\pi}_i$ under $d_{\pi^*}^t$ (w.r.t. $\pi^*$), and $d$ any state distribution besides $d_{\pi^*}$. We can then express $d_{\pi_i}$ by:
\begin{align*}
d_{\pi_i} & = \prod_{t=1}^T (\beta_{it} + (1 - \beta_{it})(1 - \Tilde{\epsilon}_{it})) d_{\pi^*} \\
          & + (1 - \prod_{t=1}^T (\beta_{it} + (1 - \beta_{it})(1 - \Tilde{\epsilon}_{it})))d.
\end{align*}
The distance between $d_{\pi_i}$ and $d_{\pi^*}$ thus becomes
\begin{align*}
    & ||d_{\pi_i} - d_{\pi^*}||_1 \\
    = & (1 - \prod_{t=1}^T (\beta_{it} + (1 - \beta_{it})(1 - \Tilde{\epsilon}_{it}))) ||d-d_{\pi^*}||_1\\
    \leq & 2(1 - \prod_{t=1}^T (\beta_{it} + (1 - \beta_{it})(1 - \Tilde{\epsilon}_{it})))\\
    \leq & 2 \sum_{t=1}^T [1 - (\beta_{it} + (1 - \beta_{it})(1 - \Tilde{\epsilon}_{it}))]\\
    \leq & 2 \sum_{t=1}^T [\Tilde{\epsilon}_{it}(1-\beta_{it})]\\
    \leq & 2 \sum_{t=1}^T e_{it} \\
    = & 2 T e_i.
\end{align*}
The second inequality uses $1 - \prod_{t=1}^T x_t \leq \sum_{t=1}^T (1 - x_t)$, which holds when $x_t\in [0,1]$. 
\end{proof}
By applying Assumption~\ref{assumption:no-regret} and Lemma~\ref{lemma:1} to Eq.~\eqref{eqn:app:obj}, we derive the following inequality:
\begin{align}
J(\hat{\pi}) \leq T \big[ \gamma_N + \epsilon_N + \frac{2T \ell_{max}}{N} \sum_{i=1}^N e_i \big].\nonumber
\end{align}
Given a large enough $N$ ($N \in \Tilde{O}(T)$), by the no-regret assumption, we can further simplify the above as:
\begin{align}
J(\hat{\pi}) \leq T \big[ \epsilon_N + \frac{2T \ell_{max}}{N} \sum_{i=1}^N e_i \big] + O(1),\nonumber
\end{align}
which leads to our theorem:
\begin{theorem}\label{theorem:app:misp-l}
    For the proposed \neil algorithm, if $N$ is $\Tilde{O}(T)$, there exists a policy $\hat{\pi} \in \hat{\pi}_{1:N}$ s.t. $J(\hat{\pi}) \leq T \big[ \epsilon_N + \frac{2T \ell_{max}}{N} \sum_{i=1}^N e_i \big] + O(1)$.
\end{theorem}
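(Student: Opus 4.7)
The plan is to follow the standard \textsc{DAgger}-style reduction from online imitation learning to supervised learning, with two adaptations specific to \neil: the imitation target is solicited only when the learner is not confident, and the cost $J$ measures quality on the \emph{expert-induced} state distribution $d_{\pi^*}$ whereas training is carried out on the \emph{mixture-induced} distribution $d_{\pi_i}$. My goal is therefore to rewrite $J(\hat{\pi})$ as the sum of two quantities I can control separately: an in-distribution training-loss term, and a state-distribution mismatch term.

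First I would upper bound $J(\hat{\pi})$, where $\hat{\pi}$ is the best validation-time policy in $\hat{\pi}_{1:N}$, by the average $\frac{T}{N}\sum_{i=1}^N \mathbb{E}_{s \sim d_{\pi^*}}[\ell(s,\hat{\pi}_i)]$ using the trivial fact that a minimum is at most an average. Then, for each summand, I would add and subtract $\mathbb{E}_{s\sim d_{\pi_i}}[\ell(s,\hat{\pi}_i)]$ and apply the H\"older-type bound $|\mathbb{E}_{d_{\pi_i}}[\ell]-\mathbb{E}_{d_{\pi^*}}[\ell]| \leq \ell_{max}\,||d_{\pi_i}-d_{\pi^*}||_1$, giving precisely the two-term decomposition written in Eq.~\eqref{eqn:app:mispl_derivation}.

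The main work is then in bounding these two pieces. For the in-distribution loss, I would invoke the no-regret assumption (Assumption~\ref{assumption:no-regret}) to obtain $\frac{1}{N}\sum_{i=1}^N \mathbb{E}_{s\sim d_{\pi_i}}[\ell(s,\hat{\pi}_i)] \leq \epsilon_N + \gamma_N$. For the mismatch, I would appeal to Lemma~\ref{lemma:1}, which yields $||d_{\pi_i}-d_{\pi^*}||_1 \leq 2 T e_i$. The idea behind that lemma is that a trajectory from the mixture policy $\pi_i$ coincides with one from $\pi^*$ at step $t$ as long as the learner either queries the expert (probability $\beta_{it}$) or takes a confident action that happens to be correct (probability $(1-\beta_{it})(1-\Tilde{\epsilon}_{it})$); only a confident-but-wrong step causes deviation, with probability at most $e_{it}$. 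Chaining across $T$ steps via $1-\prod_t x_t \leq \sum_t (1-x_t)$ for $x_t\in[0,1]$, together with the crude bound $||d-d_{\pi^*}||_1\leq 2$ for any distribution $d$, yields the factor $2 T e_i$ after summing the per-step error probabilities.

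Plugging both pieces back and using $N \in \Tilde{O}(T)$ absorbs the additive $T\gamma_N$ term into $O(1)$ to deliver the stated bound. The main obstacle I anticipate is the no-regret step itself: standard guarantees with $\gamma_N \in \Tilde{O}(1/N)$ rely on (strong) convexity of the per-round loss $\ell_i(\cdot)$, which fails for neural semantic parsers. Following the paper's stance, I would state no-regret as a clean assumption and note that extending it to the non-convex setting, likely with a worse $\gamma_N$ rate, is a separate concern. A minor subtlety worth flagging is that $\ell_{max}\leq 1$ by the definition $\ell(s,\pi)=1-p_\pi(a^*|s)$, so the multiplicative factor in the mismatch term is benign and need not be tracked further.
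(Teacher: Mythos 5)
Your proposal is correct and takes essentially the same route as the paper's own proof: bound the minimum over $\hat{\pi}_{1:N}$ by the average, decompose each term into the in-distribution training loss plus the $\ell_{max}\,||d_{\pi_i}-d_{\pi^*}||_1$ mismatch, control the former via the no-regret assumption and the latter via Lemma~\ref{lemma:1} (whose mixture-decomposition argument and the chaining inequality $1-\prod_{t}x_t \leq \sum_{t}(1-x_t)$ you reproduce accurately), and absorb $T\gamma_N$ into $O(1)$ using $N \in \Tilde{O}(T)$. Your flagged caveats --- stating no-regret as an assumption due to non-convexity of neural parsers, and noting $\ell_{max}\leq 1$ --- likewise mirror the paper's own discussion.
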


By comparing Theorem~\ref{theorem:app:sup} and Theorem~\ref{theorem:app:misp-l}, it is obvious that the performance gap between \neil and the supervised approach is bounded by the term around $\frac{1}{N}\sum_{i=1}^N e_i$. We discuss its implications in Section~\ref{sec:theoretical_analysis} and show that, in practice, this performance gap can be controlled by carefully initializing the policy and choosing a more accurate confidence estimator.

\begin{remark}[Discussion about \mispls.]
In experiments, we consider a skyline instantiation of \neil, called \mispls. This instantiation is assumed with perfect confidence estimation and interaction design, such that it can precisely detect and correct its intermediate mistakes during parsing. Therefore, \mispls presents an upper bound performance (i.e., the \emph{tightest} cost bound) of \neil. This can be interpreted theoretically. In fact, for \mispls, $e_i$ is always zero since the system has ensured that its policy action is correct when it does not query the expert policy. In this case, $d_{\pi_i} = d_{\pi^*}$, so $\epsilon_N = \min_{\pi \in \Pi} \frac{1}{N} \sum_{i=1}^N \mathbb{E}_{s \sim d_{\pi^*}}[l(s, \pi)] = \min_{\pi \in \Pi} \mathbb{E}_{s \sim d_{\pi^*}}[l(s, \pi)]$. Therefore, according to Theorem~\ref{theorem:app:misp-l}, \mispls has a cost bound of:
\begin{equation*}
    J(\hat{\pi}) \leq T \epsilon_N + O(1),
\end{equation*}
where $\epsilon_N=\min_{\pi \in \Pi} \mathbb{E}_{s \sim d_{\pi^*}}[l(s, \pi)]$.

By comparing this bound with the cost bound in Theorem~\ref{theorem:app:sup}, it is observed that \mispls shares the same cost bound as the supervised approach (except for the inequality relation and the constant). This is explainable since \mispls indeed collects exactly the same training labels as the supervised approach.
\end{remark}

\subsection{Cost Bound of \neil Algorithm in Finite Sample Case}\label{app:subsec:finite}

The theorem in the previous section holds when the algorithm observes infinite trajectories in training. However, in practice, \neil observes the training loss from only a finite set of $m$ trajectories in each iteration. For this consideration, in the following discussion, we provide a proof of the cost bound of \neil under the finite sample case.

Denote $D_i$ as the $m$ trajectories collected in the $i$-th iteration and $\ell_i(\hat{\pi}_i) = \mathbb{E}_{s\sim D_i} [\ell(s, \hat{\pi}_i)]$. Applying the no-regret assumption (Assumption~\ref{assumption:no-regret}) allows us to bound the average expected policy training loss: $\frac{1}{N}\sum_{i=1}^N\mathbb{E}_{s\sim D_i}\big[\ell(s, \pi_i)\big] - \Tilde{\epsilon}_N \leq \Tilde{\gamma}_N$, where $\Tilde{\epsilon}_N=\min_{\pi\in \Pi}\frac{1}{N}\sum_{i=1}^N\mathbb{E}_{s\sim D_i}\big[\ell(s, \pi)\big]$ denotes the loss of the best policy in hindsight on the finite samples. 

Following Eq.~\eqref{eqn:app:mispl_derivation}, we need to switch the derivation from the expected loss of $\hat{\pi}_i$ over $d_{\pi_i}$ (i.e., $\mathbb{E}_{s \sim d_{\pi_i}}[\ell(s, \hat{\pi}_i)]$) to that over $D_i$ (i.e., $\mathbb{E}_{s \sim D_i}[\ell(s, \hat{\pi}_i)]$), the actual state distribution that $\hat{\pi}_i$ is trained on. To fill this gap, we introduce $Y_{ij}$ to denote the difference between the expected loss of $\hat{\pi}_i$ under $d_{\pi_i}$ and the average loss of $\hat{\pi}_i$ under the $j$-th sample trajectory with $\pi$ at iteration $i$. The random variables $Y_{ij}$ over all $i \in \{1,2,...,N\}$ and $j \in \{1,2,...,m\}$ are all zero mean, bounded in $[-\ell_{max},\ell_{max}]$ and form a martingale in the order of $Y_{11}, Y_{12}, ..., Y_{1m},Y_{21}, ..., Y_{Nm}$.
By Azuma-Hoeffding’s inequality \cite{azuma1967weighted,hoeffding1994probability}, $\frac{1}{mN}\sum_{i=1}^{N}\sum_{j=1}^{m} Y_{ij}\leq \ell_{max}\sqrt{\frac{2\log(1/\delta)}{mN}}$ with probability $1-\delta$. 
Following the derivations in Eq.~\eqref{eqn:app:mispl_derivation} and by introducing $Y_{ij}$, with probability of $1-\delta$, we obtain the following inequalities by definition:
\begin{align*}    
     & J(\hat{\pi})\nonumber\\
     \leq & \frac{T}{N} \sum_{i=1}^N \big[ \mathbb{E}_{s \sim d_{\pi_i}}[\ell(s, \hat{\pi}_i)] + \ell_{max} ||d_{\pi_i} - d_{\pi^*}||_1 \big] \\
    \leq & \frac{T}{N} \sum_{i=1}^N \Big[ \mathbb{E}_{s \sim D_i}[\ell(s, \hat{\pi}_i)] + \ell_{max} ||d_{\pi_i} - d_{\pi^*}||_1 \Big] \nonumber\\
    & + \frac{T}{mN}\sum_{i=1}^N\sum_{j=1}^m Y_{ij} \nonumber\\
    \leq & \frac{T}{N} \sum_{i=1}^N \Big[ \mathbb{E}_{s \sim D_i}[\ell(s, \hat{\pi}_i)] + \ell_{max} ||d_{\pi_i} - d_{\pi^*}||_1 \Big]   \nonumber\\
    & + \ell_{max}T\sqrt{\frac{2\log(1/\delta)}{mN}}\nonumber \\
     \leq & T \Big[\Tilde{\gamma}_N + \Tilde{\epsilon}_N + \ell_{max}\sqrt{\frac{2\log(1/\delta)}{mN}} \nonumber\\
     & + \frac{2\ell_{max}T}{N}\sum_{i=1}^{N}  e_i \Big].
\end{align*}

Notice that we need $mN$ to be at least $\Tilde{O}(T^2log(1/\delta))$, so that $\Tilde{\gamma_N}$ and $l_{max}\sqrt{\frac{2\log(1/\delta)}{mN}}$ are negligible.
This leads to the following theorem:
\begin{theorem}\label{theorem:app:misp-l-finite}
    For the proposed \neil algorithm, with probability at least $1-\delta$, when $mN$ is $\Tilde{O}(T^2log(1/\delta))$, there exists a policy $\hat{\pi} \in \hat{\pi}_{1:N}$ s.t. $J(\hat{\pi}) \leq T [ \Tilde{\epsilon}_N + \frac{2l_{max}T}{N}\sum_{i=1}^{N}   e_i ] + O(1)$.
\end{theorem}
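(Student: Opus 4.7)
My plan is to mirror the infinite-sample derivation of Theorem~\ref{theorem:app:misp-l} and plug in a concentration inequality to absorb the finite-sample sampling error. I would start from the same decomposition that was used in the infinite-sample proof, namely
\begin{equation*}
J(\hat{\pi}) \;\leq\; \frac{T}{N}\sum_{i=1}^{N}\Big[\mathbb{E}_{s\sim d_{\pi_i}}[\ell(s,\hat{\pi}_i)] + \ell_{\max}\,\|d_{\pi_i}-d_{\pi^*}\|_1\Big],
\end{equation*}
which uses only the definition of $J$ and the $L_1$ bound of $\ell$; nothing in it relies on the sample-size assumption. The right-hand side still contains the \emph{population} expectation $\mathbb{E}_{s\sim d_{\pi_i}}[\ell(s,\hat{\pi}_i)]$, which I need to convert into the \emph{empirical} one $\mathbb{E}_{s\sim D_i}[\ell(s,\hat{\pi}_i)]$ before I can invoke the no-regret assumption, since that assumption now applies to empirical losses $\tilde{\epsilon}_N$.

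The key step is constructing a martingale to bridge these two quantities. For each iteration $i$ and each of the $m$ sampled trajectories $j$, define $Y_{ij}$ as the difference between $\mathbb{E}_{s\sim d_{\pi_i}}[\ell(s,\hat{\pi}_i)]$ and the per-trajectory empirical loss under $\pi_i$. Order these variables as $Y_{11},\dots,Y_{1m},Y_{21},\dots,Y_{Nm}$; each $Y_{ij}$ is zero-mean conditional on the preceding random choices (because $\hat{\pi}_i$ is fixed before the $i$-th batch is drawn), and each is bounded in $[-\ell_{\max},\ell_{\max}]$ since $\ell\in[0,1]$ and $\ell_{\max}\leq 1$. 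I would then apply Azuma--Hoeffding to conclude that with probability at least $1-\delta$,
\begin{equation*}
\frac{1}{mN}\sum_{i=1}^{N}\sum_{j=1}^{m} Y_{ij} \;\leq\; \ell_{\max}\sqrt{\tfrac{2\log(1/\delta)}{mN}}.
\end{equation*}

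Next I would chain the inequalities: replace $\mathbb{E}_{s\sim d_{\pi_i}}[\ell(s,\hat{\pi}_i)]$ by $\mathbb{E}_{s\sim D_i}[\ell(s,\hat{\pi}_i)]$ at the cost of the Azuma--Hoeffding deviation term, apply the empirical no-regret assumption to bound $\frac{1}{N}\sum_i \mathbb{E}_{s\sim D_i}[\ell(s,\hat{\pi}_i)]\leq \tilde{\epsilon}_N+\tilde{\gamma}_N$, and apply Lemma~\ref{lemma:1} to bound $\|d_{\pi_i}-d_{\pi^*}\|_1\leq 2Te_i$. Assembling these yields
\begin{equation*}
J(\hat{\pi}) \;\leq\; T\Big[\tilde{\gamma}_N + \tilde{\epsilon}_N + \ell_{\max}\sqrt{\tfrac{2\log(1/\delta)}{mN}} + \tfrac{2\ell_{\max}T}{N}\sum_{i=1}^{N}e_i\Big].
\end{equation*}
When $mN$ is $\tilde{O}(T^2\log(1/\delta))$, the term $\ell_{\max}T\sqrt{2\log(1/\delta)/(mN)}$ is $O(1)$, and by the no-regret assumption $T\tilde{\gamma}_N$ is also $O(1)$, giving the claimed bound. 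Since the right-hand side is an average bound on $\hat{\pi}_{1:N}$, there must exist at least one index $i$ whose policy attains it; choosing this one as $\hat{\pi}$ completes the argument.

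The main obstacle will be rigorously verifying that the $Y_{ij}$ indeed form a martingale difference sequence in the specified order. I need to be careful about the filtration: $\hat{\pi}_i$ itself is a function of $D_1,\dots,D_{i-1}$ and therefore of the earlier $Y$'s, while the mixture policy $\pi_i$ is also determined before the $i$-th batch is drawn. Once I show that conditional on everything preceding $Y_{ij}$ the expectation of $Y_{ij}$ vanishes, the bounded-difference requirement is immediate and Azuma--Hoeffding applies. Beyond this measurability bookkeeping, the rest of the argument is a straightforward adaptation of the infinite-sample proof.
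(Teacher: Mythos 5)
Your proposal is correct and follows essentially the same route as the paper's own proof: the same decomposition of $J(\hat{\pi})$, the same martingale differences $Y_{ij}$ ordered $Y_{11},\dots,Y_{1m},Y_{21},\dots,Y_{Nm}$ with the Azuma--Hoeffding bound $\ell_{max}\sqrt{2\log(1/\delta)/(mN)}$, the same application of the empirical no-regret assumption (yielding $\tilde{\epsilon}_N + \tilde{\gamma}_N$) together with Lemma~\ref{lemma:1}, and the same condition $mN = \tilde{O}(T^2\log(1/\delta))$ to absorb the deviation and regret terms into $O(1)$. Your attention to the filtration, namely that $\hat{\pi}_i$ and $\pi_i$ are determined by $D_1,\dots,D_{i-1}$ before the $i$-th batch is drawn, merely makes explicit a measurability point the paper asserts without comment.
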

The theorem shows that the cost of \neil can still be bounded in the finite sample setting. Comparing this bound with the bound under the infinite sample setting, we can observe that the bound is still related to $e_i$, the probability that $\hat{\pi}_i$ takes a confident but incorrect action under $d_{\pi^*}$.

\section{Implementation Details}\label{app:implementation_details}

\subsection{Interactive Semantic Parsing Framework}\label{app:misp}

Our system assumes an interactive semantic parsing framework to collect user feedback. In experiments, this is implemented by adapting MISP \cite{yao2019model}, an open-sourced framework\footnote{\url{https://github.com/sunlab-osu/MISP}.} that has demonstrated a strong ability to improve test-time parsing accuracy. In this framework, an agent is comprised of three components: a world model that wraps the base semantic parser and a feedback incorporation module to interpret user feeds and update the semantic parse, an error detector that decides whether to request for user intervention, and an actuator that delivers the agent's request by asking a natural language question, such that users without domain expertise can understand. 

We follow MISP's instantiation for text-to-SQL tasks to adopt a probability-based uncertainty estimator as the error detector, which triggers user interactions when the probability of the current decision is lower than a threshold. The actuator is instantiated by a grammar-based natural language generator. We use the latest version of MISP that allows multi-choice interactions to improve the system efficiency, i.e., when the parser's current decision is validated as wrong, the system presents multiple alternative options for user selection. An additional ``None of the above options'' option is included in case all top options from the system are wrong. Figure~\ref{fig:intro} shows an example of the user interaction. From there, the system can derive a correct decision to address its uncertainty (e.g., taking ``\texttt{Player}'' as a \texttt{WHERE} column).

{As a general interactive semantic parsing framework, MISP has its advantage of being generalizable to different kinds of semantic parsers (as long as their parsing process can be formulated as taking a sequence of actions in their respective action space) and various logical forms (e.g., lambda expressions). Although it could be non-trivial to instantiate such an interactive system, we note that it is a one-time effort for all datasets of the same logical form.}

\begin{remark}[Example of Non-sketch-based Parsers.] 
In addition to the example of the SQLova parser \cite{hwang2019comprehensive} that we provide in Section~\ref{sec:overview}, here we show how the EditSQL parser \cite{zhang2019editing} is formulated under MISP. 
Unlike SQLova, EditSQL does not assume any SQL sketch; it instead generates a SQL query ``token by token''.\footnote{EditSQL considers a column name as a single ``token'', although it may actually contain several words (e.g., \texttt{School/Club Team}).} Consider the SQL query in Figure~\ref{fig:intro}. EditSQL takes actions: $a_1$=``\texttt{SELECT}'', $a_2$=``\texttt{COUNT}'', $a_3$=``\texttt{(}'', $a_4$=``\texttt{School/Club Team}'', $a_5$=``\texttt{)}'', etc. Therefore, the action space of EditSQL consists of all SQL keywords, grammatical constituents (e.g., ``\texttt{(}''), and available table columns. In this case, MISP only validates semantically meaningful actions (including aggregators, operators, column names, etc.) while skipping others (including trivial symbols like ``\texttt{(}'' and most SQL keywords\footnote{Except \texttt{WHERE}, \texttt{GROUP BY}, \texttt{ORDER BY} and \texttt{HAVING}; see Appendix~\ref{app:subsec:editsql} for details.}).
\end{remark}

\begin{remark}[User Simulator.]
Our experiments train each system with simulated user feedback. To this end, we build a user simulator similar to the one used by \citet{yao2019model} in MISP, which can access the ground-truth SQL queries. It gives yes/no answer or selects a choice by directly comparing the sampled policy action with the true one in the gold query. When the true option is not presented within the system provided choices, the user is simulated to select ``None of the above options''. 
\end{remark}

\subsection{WikiSQL Experiment Details}\label{app:subsec:wikisql}

\begin{remark}[Dataset\&Model.]
Our main experiments consider the WikiSQL benchmark dataset \cite{zhong2017seq2sql},\footnote{\url{https://github.com/salesforce/WikiSQL}.} which contains 56,355/8,421/15,878 question-SQL query pairs in the training/validation/test set. We use exactly the same data split as \citet{zhong2017seq2sql}. 

We choose SQLova \cite{hwang2019comprehensive}, one of the open-sourced\footnote{\url{https://github.com/naver/sqlova}.} top-performing semantic parser on WikiSQL, as the base parser, which ensures reasonable model capability to study continual learning. Hyper-parameters are set the same as the ones recommended by the SQLova authors on their GitHub repository,\footnote{\url{https://github.com/naver/sqlova\#running-code}.} except that we use a learning rate of 1e-5 for fine-tuning the BERT model. Empirically we found out this relatively larger learning rate can greatly accelerate the model learning without affecting the model performance significantly. The total number of model parameters is around 118M, with 110M from BERT-Base (Uncased)\footnote{\url{https://github.com/google-research/bert\#pre-trained-models}.} and 8M from the SQLova parser side.

Early stop is used to accelerate model training in each training iteration. Specifically, we stop model training if it does not show improvement on the validation set for a consecutive number of epochs. We set this number to 10 before the 30-th training iteration when the total training data is in a relatively small size, and decay it to 5 after the 30-th iteration.
We follow SQLova when preprocessing the WikiSQL data.
\end{remark}

\begin{remark}[Experimental Setup.]
We study a ``continual learning'' problem and experiment various systems with three initialization settings, as suggested by our theoretical analysis (Section~\ref{sec:theoretical_analysis}). Specifically, we use 10\% (5,636 pairs), 5\% (2,818 pairs), and 1\% (564 pairs) of the total training data for parser initialization, respectively. 

In each initialization setting, the remaining training data is used to simulate user questions that a system receives after deployment. The user questions come in a random order. We repeat three random runs (i.e., three random orders of user questions) and report the average system performance. Notice that, we ensure each system receive the same user question (but may have different user feedback depending on their interaction designs) during iterative training, for a fair comparison. Systems update (retrain) their base semantic parsers periodically for every 1,000 user questions. 
\end{remark}

\begin{remark}[Metrics.]
In the end of each iteration, we evaluate the system's performance, including:
\begin{itemize}[itemsep=0.5pt,topsep=1pt]
    \item Parsing accuracy. We measure the query match accuracy (i.e., logical form accuracy) using the script from SQLova implementation.
    \item An accumulated number of user/expert annotations (introduced in Section~\ref{subsec:exp_results}). Different systems request different kinds of user/expert annotations. Therefore, even when serving the user on the same user question, different systems require different numbers of annotations. This metric sums up the total number of annotations that each system has requested after each training iteration.
\end{itemize}
Calculating the aforementioned metrics allow us to plot Figure~\ref{fig:exp} and Figure~\ref{fig:exp_dev}.
\end{remark}

\begin{remark}[Compute.]
We complete experiments on Nvidia GeForce RTX 2080Ti (11GB). Models are all implemented using PyTorch.\footnote{\url{https://pytorch.org/}.} The run time for each training iteration varies depending on the accumulated training data size. To finish the 50+ iterations of (re-)training, each system takes around 15 days. In the weak 1\% initialization case, the Binary User baseline takes less time (around 10 days), since most of its predicted queries are wrong and thus are not included into its training data. 
\end{remark}

\subsection{EditSQL Experiment Details}\label{app:subsec:editsql}

\begin{remark}[Data\&Model.]
The Spider dataset \cite{yu2018spider} contains 8,421 question-SQL pairs for training and 1,034 pairs for validation.\footnote{\url{https://github.com/taoyds/spider}.} The test set is not publicly available and is thus not used in our experiments.

We choose EditSQL \cite{zhang2019editing} as the base semantic parser,\footnote{\url{https://github.com/ryanzhumich/editsql}.} since it is one of the open-sourced state-of-the-art models on Spider. All hyper-parameters are set following \cite{zhang2019editing}. Pre-trained BERT model is also used. Totally there are around 120M parameters in the model, with 110M from the BERT-Base (Uncased)\footnote{\url{https://github.com/google-research/bert\#pre-trained-models}.} and 10M from the EditSQL parser side.
Early stop is additionally used to accelerate model training. Specifically, we stop model training when it does not show improvement on validation for 5 consecutive epochs.

In the data preprocessing step, EditSQL transforms each gold SQL query into a sequence of tokens, where the \texttt{From} clause is removed and each column \texttt{Col} is prepended by its paired table name, i.e., \texttt{Tab.Col}. However, we observe that sometimes this transformation is not convertible. For example, consider the question ``what are the first name and last name of all candidates?'' and its gold SQL query:
``\texttt{SELECT T2.first\_name ,  T2.last\_name FROM candidates AS T1 JOIN people AS T2 ON T1.candidate\_id = T2.person\_id}''.
EditSQL transforms this query into :
``\texttt{select people.first\_name , people.last\_name}''. 
The transformed sequence accidentally removes the information about table \texttt{candidates} in the original SQL query, leading to semantic meaning inconsistent with the question.
When using such erroneous sequences as the gold targets in model training, we cannot simulate consistent user feedback, e.g., when the user is asked whether her query is relevant to the table \texttt{candidates}, the simulated user cannot give an affirmative answer based on the transformed sequence. To avoid inconsistent user feedback, we remove question-SQL pairs whose transformed sequence is inconsistent with the original gold SQL query, from the training data. This can be easily done by using EditSQL's post-processing script to convert a preprocessed sequence back to the SQL format. Only when the converted query is the same as the original one, the transformation is consistent. This reduces the size of the training set from 8,421 to 7,377. The validation set is kept untouched for fair evaluation.

The implementation of interactive semantic parsing for EditSQL is the same as Section~\ref{app:misp}, except that, in order to cope with the complicated structure of Spider SQL queries, for columns in \texttt{WHERE}, \texttt{GROUP BY}, \texttt{ORDER BY} and \texttt{HAVING} clauses, we additionally provide an option for the user to ``remove'' the clause, e.g., removing a \texttt{WHERE} clause by picking the ``The system does not need to consider any conditions.'' option. We also adjust the ``semantic unit'' definition in MISP\footnote{\url{https://github.com/sunlab-osu/MISP/blob/multichoice_q/MISP_SQL/tag_seq_logic.md}.} to deal with the autoregressive decoding of EditSQL. For example, instead of asking first about a \texttt{SELECT} column and then about its aggregator, we define one semantic unit to inquire about both the column and its aggregator.

To instantiate \neil, the confidence threshold $\mu$ is 0.995 as we observe that EditSQL tends to be overconfident.
\end{remark}

\begin{remark}[Experimental Setup.]
We experiment with one initialization setting, using 10\% of the total training data (i.e., 737 question-SQL pairs), and systems update (retrain) their base semantic parsers periodically for every 1,000 user questions as in WikiSQL experiments. We report system performance averaged over three random runs (i.e., three random orders of user questions).

We also tried using more training data for initialization. However, since the total training data in Spider is very limited in size, more initialization data means fewer data for simulating online user questions and conducting continual learning. This leads to less clear experimental observations (e.g., even the Full Expert system shows fluctuation, probably due to data redundancy or an issue with model architecture capability). Therefore, we only focus on the 10\% initialization setting.
\end{remark}

\begin{remark}[Metrics.]
We measure each system similarly as in WikiSQL experiments. For parsing performance, we calculate the exact match accuracy using scripts from the EditSQL implementation.
\end{remark}

\begin{remark}[Compute.]
We complete experiments on Nvidia GeForce RTX 2080Ti (11GB). Models are implemented using PyTorch. The run time for each training iteration varies depending on the accumulated training data size. Finishing the whole iterative learning takes around 5 days for all systems.
\end{remark}

\section{Additional Experimental Results}\label{app:detailed_exp}

\subsection{Additional SQLova Results}
Figure~\ref{fig:exp_dev} shows different systems' performance on WikiSQL validation set. 
For Binary User(+Expert), it is hard to quantify ``one annotation'', which varies according to the actual database size and the query difficulty. As a compromise, we approximate this number by calculating it in the same way as Full Expert, with the assumption that in general validating execution results is as hard as validating the SQL query itself.

We also show in Figure~\ref{fig:q_curve} the average number of annotations (i.e., user interactions) that \mispl requires per question during the iterative training. Overall, as the base parser is further trained, our system tends to request fewer user interactions. In most cases throughout the training, the system requests no more than one user interaction, demonstrating the annotation efficiency of our \neil algorithm.

\begin{figure*}[ht!]
    \centering
    \includegraphics[width=0.95\linewidth]{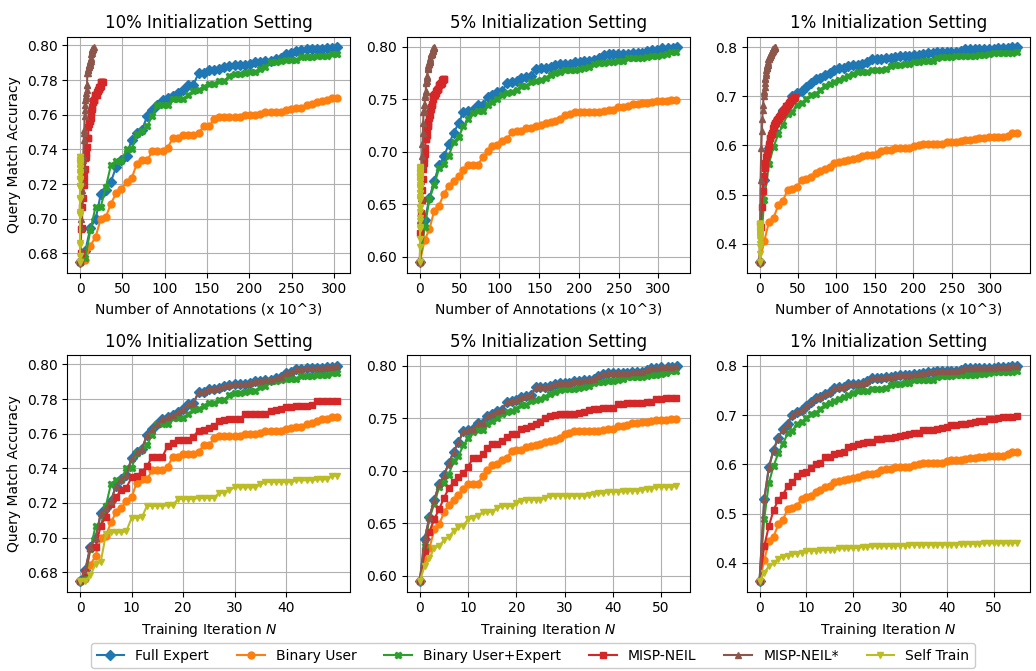}
    \caption{Parsing accuracy on WikiSQL validation set when systems are trained with various numbers of user/expert annotations (top) and for different iterations (bottom). We experiment systems with three initialization settings, using 10\%, 5\% and 1\% of the training data respectively.}
    \label{fig:exp_dev}
\end{figure*}

\begin{figure*}[ht]
    \centering
    \begin{subfigure}{0.7\columnwidth}
        \centering
        \includegraphics[width=\linewidth]{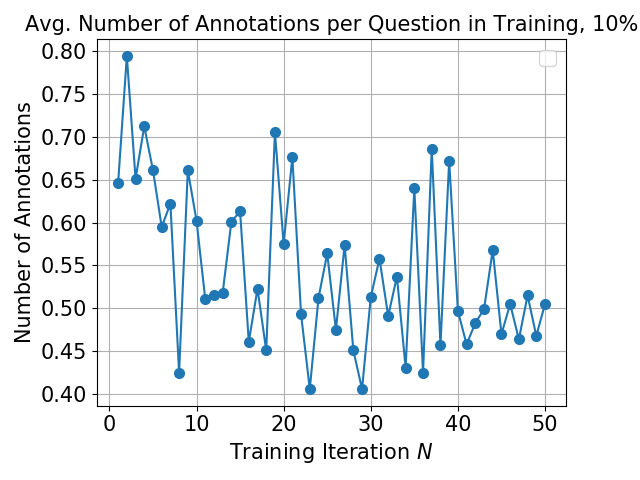}
    \end{subfigure}%
    \begin{subfigure}{0.7\columnwidth}
        \centering
        \includegraphics[width=\linewidth]{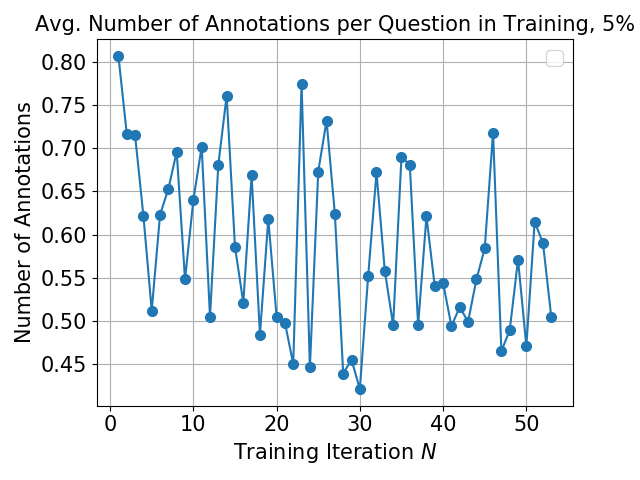}
    \end{subfigure}%
    \begin{subfigure}{0.7\columnwidth}
        \centering
        \includegraphics[width=\linewidth]{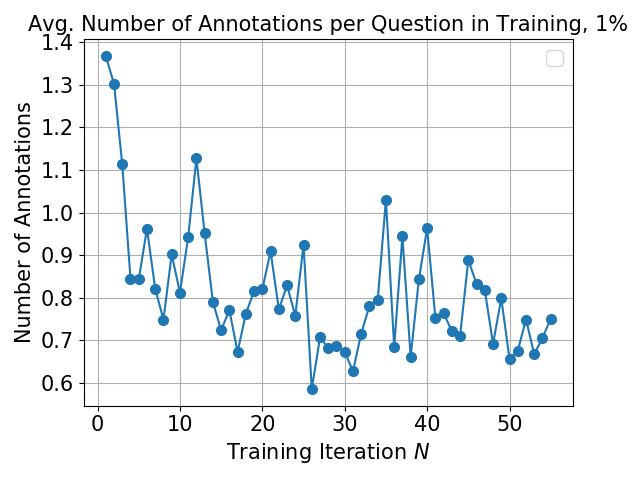}
    \end{subfigure}
    \caption{Average number of user annotations/interactions that \mispl requests for each user question during iterative training (on WikiSQL), when the parser is initialized using 10\%, 5\% and 1\% of training data.}
    \label{fig:q_curve}
\end{figure*}

\subsection{Connection to Theoretical Analysis}\label{app:subsec:theoretical_exp}
As we proved in Section~\ref{sec:theoretical_analysis}, the performance gap between our proposed \neil algorithm and the supervised approach is mainly decided by $\frac{1}{N}\sum_{i=1}^N e_i$, an average probability that $\hat{\pi}_i$ makes a confident but wrong decision under $d_{\pi^*}$ (i.e., given a gold partial parse) over $N$ training iterations. 
More specifically, from our proof of Lemma~\ref{lemma:1}, $e_i$ can be expressed as: 
$$
e_i = \frac{1}{T}\sum_{t=1}^T e_{it} = \frac{1}{T}\sum_{t=1}^T \Tilde{\epsilon}_{it}(1 - \beta_{it}),
$$
where $\Tilde{\epsilon}_{it}$ denotes policy $\hat{\pi}_i$'s conditional error rate under $d_{\pi^*}^t$ when it does not query the expert (i.e., being confident about its own action) at step $t$, and $1-\beta_{it}$ denotes the probability that $\hat{\pi}_i$ does not query the expert under $d_{\pi^*}^t$. $\Tilde{\epsilon}_{it}(1 - \beta_{it})$ thus represents a joint probability that $\hat{\pi}_i$ makes confident but wrong action under $d_{\pi^*}^t$ at step $t$.

To show a reflection of our theoretical analysis on the experiments, we present the values of the following three variables during training: (1) $\Tilde{\epsilon}_i = \frac{1}{T}\sum_{t=1}^T \Tilde{\epsilon}_{it}$, the average value of $\Tilde{\epsilon}_{it}$ over $T$ time steps. A smaller $\Tilde{\epsilon}_i$ implies a lower conditional error rate and thus a smaller $e_i$ and a smaller performance gap. (2) $\beta_i = \frac{1}{T}\sum_{t=1}^T \beta_{it}$, the average value of $\beta_{it}$ over $T$ time steps. A smaller $\beta_i$ (i.e., a larger $1 - \beta_i$) means a smaller probability that $\hat{\pi}_i$ queries the expert (i.e., being more confident). This could lead to a larger $e_i$ and thus a larger performance gap. (3) $e_i$ as defined previously. A smaller $e_i$ indicates a smaller performance gap between our algorithm and the supervised approach.

\begin{figure*}[ht]
    \centering
    \begin{subfigure}{0.7\columnwidth}
        \includegraphics[width=\linewidth]{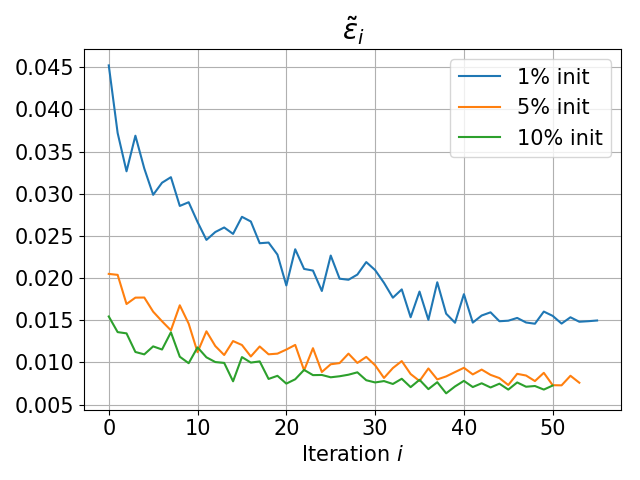}
        \caption{}
        \label{subfig:epsilon_i}
    \end{subfigure}%
    \begin{subfigure}{0.7\columnwidth}
        \includegraphics[width=\linewidth]{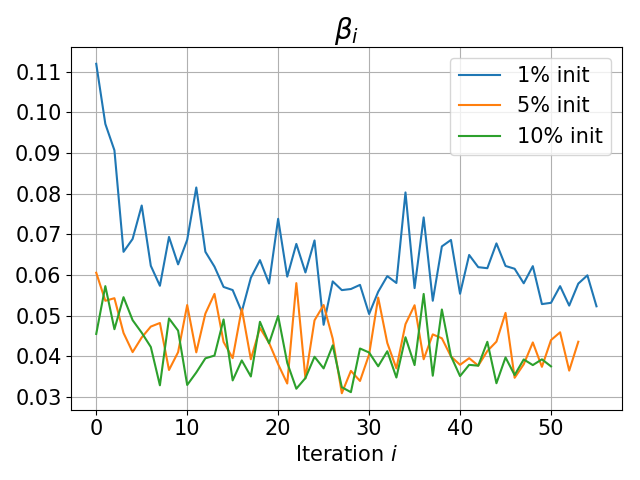}
        \caption{}
        \label{subfig:beta_i}
    \end{subfigure}%
    \begin{subfigure}{0.7\columnwidth}
        \includegraphics[width=\linewidth]{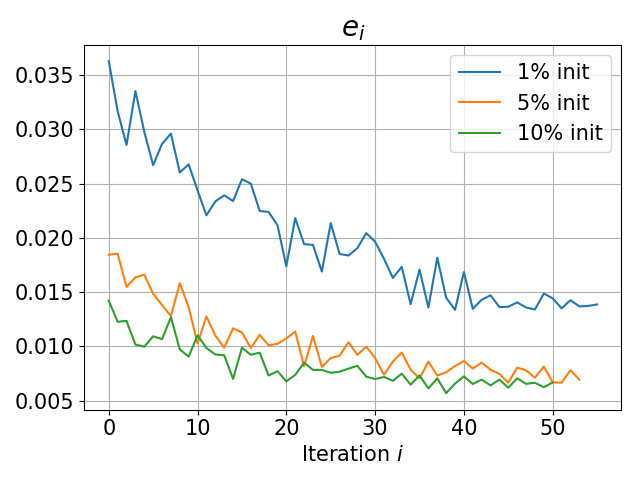}
        \caption{}
        \label{subfig:e_i}
    \end{subfigure}
    \caption{The values of $\Tilde{\epsilon}_i$ (a), $\beta_i$ (b) and $e_i$ (c) in \mispl throughout the training (on WikiSQL validation set), under different initialization settings.}
    \label{fig:theoretical_analysis}
\end{figure*}

We plot the results of our \mispl system (based on SQLova) in Figure~\ref{fig:theoretical_analysis}. For all initialization settings, we observe that the base parser tends to make more confident actions under a gold partial parse (i.e., decreasing $\beta_i$) when it is trained for more iterations. Meanwhile, the error rate of its confident actions under a gold partial parse is also reduced (i.e., decreasing $\Tilde{\epsilon}_i$). When combining the two factors, $e_i$ is shown to keep decreasing, implying that with more iterations that the parser is trained, it gets a tighter cost bound and better performance.

Finally, we notice that a differently initialized parser can end up with different performance. This is reasonable since a better initialized parser presumably should have a better overall error rate. This is also consistent with our observation in the main experimental results (Section~\ref{subsec:exp_results}).

\end{document}